\newcommand{\dist}{\Delta}
\renewcommand{\bar}{\overline}
\newtheorem{theorem}{Theorem}[section]
\newtheorem{proposition}{Proposition}[section]
\theoremstyle{definition}
\newtheorem*{proposition*}{Proposition}
\newtheorem{definition}{Definition}[section]
\def\eqref#1{equation~\ref{#1}}
\def\1{\bm{1}}
\def\vb{{\bm{b}}}
\def\ve{{\bm{e}}}
\DeclareMathAlphabet{\mathsfit}{\encodingdefault}{\sfdefault}{m}{sl}
\SetMathAlphabet{\mathsfit}{bold}{\encodingdefault}{\sfdefault}{bx}{n}
\def\gA{{\mathcal{A}}}
\def\gD{{\mathcal{D}}}
\def\gL{{\mathcal{L}}}
\def\gM{{\mathcal{M}}}
\def\gO{{\mathcal{O}}}
\def\gS{{\mathcal{S}}}
\newcommand{\E}{\mathbb{E}}
\newcommand{\R}{\mathbb{R}}
\DeclareMathOperator*{\argmax}{arg\,max}
\newcommand{\epomdp}{P(\gM | \gD)}
\newcommand{\algoname}{APE-V}
\icmltitlerunning{Offline RL Policies Should be Trained to be Adaptive}
\newcommand{\hQ}{\hat{Q}}
\begin{document}

\twocolumn[
\icmltitle{Offline RL Policies Should be Trained to be Adaptive}

\begin{icmlauthorlist}
\icmlauthor{Dibya Ghosh}{xxx}
\icmlauthor{Anurag Ajay}{yyy}
\icmlauthor{Pulkit Agrawal}{yyy}
\icmlauthor{Sergey Levine}{xxx}
\end{icmlauthorlist}
\icmlaffiliation{xxx}{UC Berkeley}
\icmlaffiliation{yyy}{MIT}

\icmlcorrespondingauthor{Dibya Ghosh}{dibya@berkeley.edu}

\icmlkeywords{Machine Learning, ICML}

\vskip 0.3in
]

\printAffiliationsAndNotice{} %
\begin{abstract}
Offline RL algorithms must account for the fact that the dataset they are provided may leave many facets of the environment unknown.
The most common way to approach this challenge is to employ pessimistic or conservative methods, which avoid behaviors that are too dissimilar from those in the training dataset. However, relying exclusively on conservatism has drawbacks: performance is sensitive to the exact degree of conservatism, and conservative objectives can recover highly suboptimal policies.
In this work, we propose that offline RL methods should instead be \textit{adaptive} in the presence of uncertainty. We show that acting optimally in offline RL in a Bayesian sense involves solving an implicit POMDP. As a result, optimal policies for offline RL must be adaptive, depending not just on the current state but rather all the transitions seen so far during evaluation.
We present a model-free algorithm for approximating this optimal adaptive policy, and demonstrate the efficacy of learning such adaptive policies in  offline RL benchmarks.
\end{abstract}

\section{Introduction}
\label{sec:intro}

Uncertainty about the environment is paramount in offline reinforcement learning (RL), the task of learning control behaviors from a dataset of logged environment interactions. Unlike in traditional RL, where the agent learns about all aspects of the environment through online interaction during training, offline RL methods must rely only on a fixed dataset, which often precludes the agent from identifying the exact dynamics of the environment. 

The most common approach for handling the resulting uncertainty is to learn conservative state-based policies incentivized to stay close to dataset behaviors. 
While these methods have been empirically successful with careful hyperparameter choices, it remains unclear 
whether conservative objectives are, in general, the best approach for designing offline RL algorithms. To formally understand this, we study the problem of offline RL under a Bayesian perspective. We show that when learning from an offline dataset of experience that does not fully specify the environment, uncertainty manifests as an \textit{implicit partial observability} in the original fully-observable learning problem. This implicit partial observability leads to a surprising consequence: a static state-based policy in offline RL, no matter how conservative, can be arbitrarily sub-optimal for maximizing test-time return, and in general, offline RL agents must \textit{adapt} to changes in the agent's uncertainty to act optimally. 

To understand where adaptation can help, consider what happens if an offline RL policy makes a mistake at evaluation time, e.g. taking an action that the agent incorrectly believes to lead to a high-value state. Since the agent observes transitions from the environment in RL, it sees that the action doesn't lead to a high-value state, and has the ability to correct this behavior instead of blindly continuing the original learned strategy. In general, there exist a breadth of strategies for changing behaviors based on witnessed environment transitions that are untapped by contemporary offline RL algorithms, which learn static policies using objectives not designed to incentivize adaptation.

The Bayesian perspective defines an optimal adaptive offline RL policy, which is the solution to a POMDP implicitly induced by the offline dataset \citep{Duff2002OptimalLC, Ghosh2021WhyGI}. Unfortunately, this optimal policy cannot be easily obtained using standard model-free POMDP algorithms, since the POMDP is only an \textit{implicit construction} defined by the agent's epistemic uncertainty. Rather, we derive an approximate approach to solve the POMDP that uses an ensemble of value functions to estimate the implicit partial observability, and learns an adaptive policy using this ensemble. By optimizing this adaptive policy with the implicit POMDP objective, the policy learns to adapt in a way that helps the agent maximize evaluation performance in the true environment.

The primary contribution of our work is to formally demonstrate the necessity of adaptation in offline RL, and to provide a practical algorithm for learning optimally adaptive policies. We use the Bayesian formalism to show why adaptability is necessary, and how a policy must adapt to optimally maximize offline RL performance. As a first step towards approximating Bayes-optimal adaptiveness, we propose an ensemble-based offline RL algorithm that imbues policies with the ability to adapt within an episode, and demonstrate its favorable properties in offline RL domains.

\section{Problem Setup}
The goal in reinforcement learning is to maximize an agent's expected return in a Markov decision process (MDP), written as $\gM \coloneqq (\gS, \gA, r, T, \rho, \gamma)$. An MDP is specified by a state space $\gS$, action space $\gA$, Markovian transition kernel $T(s'|s, a)$, reward function $r(s,a)$, initial state distribution $\rho(s_0)$, and discount factor $\gamma$. A history is a sequence of witnessed transitions in an MDP $h_t = (s_0, a_0, r_0, s_1, a_1, \dots, s_t)$; we write $s(h_t) \coloneqq s_t$ to denote the current state.  A policy $\pi$ maps a history to a distribution over actions, and achieves return in the MDP $J_{\gM}(\pi) = \E_\pi[\sum_{t \geq 0} \gamma^t r(s_t, a_t)]$. The maximal return in an MDP can be attained by a Markovian policy (one that depends on $h_t$ only through $s_t$) \citep{puterman1994markov}.
 
The value function of a policy $\pi$ is the expected return of the policy having taken an action $a$ after witnessing a history $h$,  $Q_{\gM}^\pi(h_t, a_t)=\E_{\pi}[\sum_{t'=0}^\infty \gamma^{t'} r(s_{t+t'}, a_{t+t'}) \vert h, a]$. The gradient of $J_\gM(\pi_\theta)$ can be written using $Q^\pi$:
\begin{equation}
\label{eq:pg_mdp}
    \nabla_\theta J_{\gM}(\pi_\theta) = \E_{h \sim \pi}[ \nabla_\theta \E_{a \sim \pi_\theta(\cdot | h)}[Q_{\gM}^\pi(h,a)]].
\end{equation}
This expression serves as the basis for the policy objective in many standard actor-critic algorithms for MDPs. While these updates are traditionally written for Markovian policies $\pi_\theta(a|s_t)$, we write the more general form using history-based policies $\pi_\theta(a | h_t)$ since our paper studies the use of history-based policies in an MDP when doing offline RL.

In offline RL, the agent receives a dataset of trajectories $\gD = \{\tau_{i}\}_{i=1}^n$ pre-collected by a behavior policy $\pi_\beta$ from some MDP $\gM^*$, and must use this dataset to learn a policy that performs well in this MDP. The performance of an offline RL policy is the expected return it achieves in the true MDP: $J_{\gM^*}(\pi)$. In this paper, we will show that even though there exist optimal Markov policies in an MDP, when learning from an offline dataset, history-based policies $\pi_\theta(a | h_t)$ can play a role in improving performance. We refer to history-based policies as \textit{adaptive} policies interchangeably throughout the text.

\section{When History is Useful for Offline RL}
\label{sec:examples}
\begin{figure}
    \centering
    \includegraphics[width=0.9\linewidth]{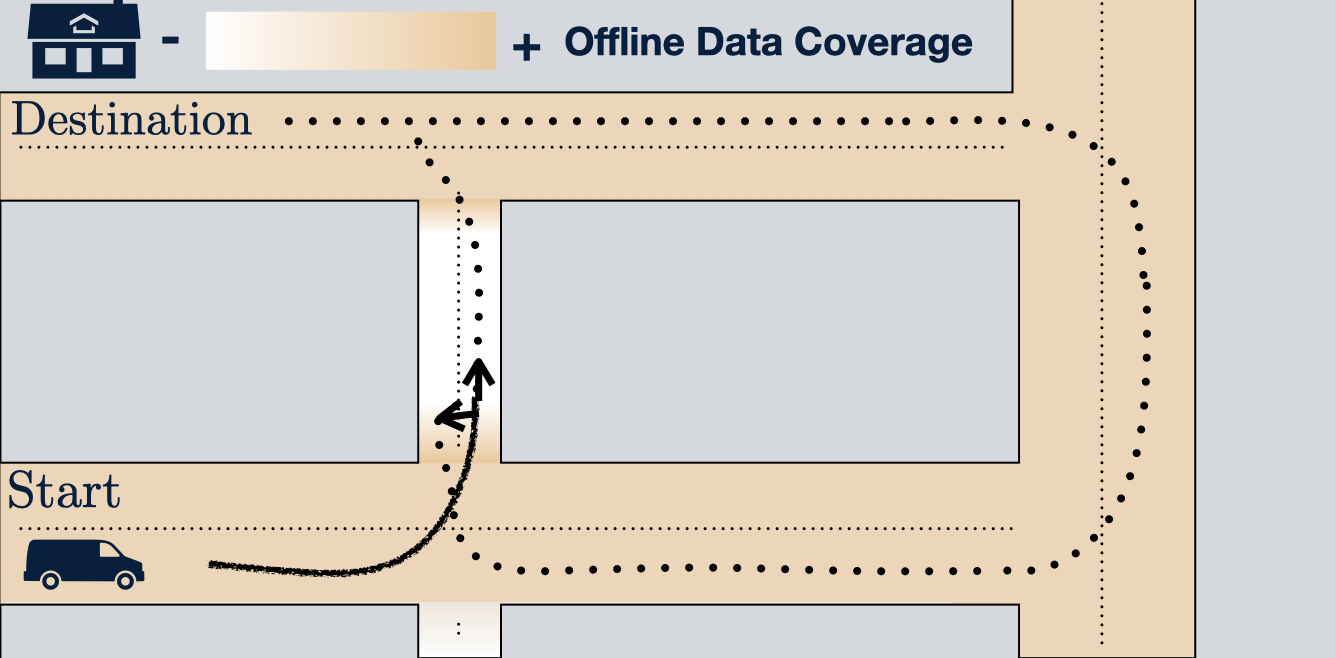}
    \vspace{-1em}
    \caption{\footnotesize The \textit{City navigation} example, where there is low data coverage on the small side street. An adaptive policy, which first tries the side street and reverts to the large road if unknown circumstances arise, will outperform Markovian alternatives, such as the conservative solution that always takes the larger road. }
    \label{fig:navigation}
    \vspace{-1em}
\end{figure}
Before formally studying optimality in offline RL, we illustrate example offline RL problems where adaptive policies achieve higher return than pessimistic Markov policies. In the first example, pessimism will provide a sub-optimal solution that can be improved with adaptation; in the second, pessimism will completely fail to solve the problem, making adaptation necessary to achieve high performance.

\textbf{City navigation:} Suppose we wish to learn to navigate in a city to a destination as quickly as possible from an offline dataset of trajectories where data is mostly concentrated on big city roads and highways, but a few trajectories navigate smaller side streets. Given the dataset, the agent has less uncertainty about paths relying on the big roads, but there exist shorter paths using side streets for which the agent has high uncertainty (visualized in Figure \ref{fig:navigation}). An algorithm that trains an RL algorithm using this dataset as a replay buffer would take the side streets, since empirically this leads to the goal fastest. This has a potentially high downside: if the agent encounters unforeseen conditions on the side streets, it may be unable to reach the goal by a reasonable time. 

A pessimistic offline RL algorithm sidesteps this failure mode by penalizing the side streets for having high uncertainty, since they have low coverage in the dataset, and learns to navigate only on big roads. While robust, this penalty prevents capitalization on the upside of the side street potentially leading to the destination faster. If we step beyond Markovian policies, we can learn adaptive strategies that can capitalize on this upside without the failure mode of standard RL: for example by starting along the side-street, and doubling back to the main road if it witnesses unexpected conditions along the way. Adaptation improves over a pessimistic solution because at test-time, the environment-agent feedback loop can signal whether the risky behavior (taking the side street) may be successful, enabling revision and consequent improvement within an evaluation episode.

\textbf{Locked doors:} Consider an environment where an agent is asked to exit a room with four doors, one unlocked and three locked. Each episode, a different door is left unlocked, and the agent receives an image whose label indicates the unlocked door (hence the environment is fully observable). Given access to a limited set of training images in the offline dataset, offline RL must learn a policy that can parse a new image at evaluation and go through the correct door. Any standard offline RL method (including pessimistic approaches)
will learn a Markovian policy that goes towards the door that the agent believes has the highest chance of being unlocked, but such strategies are sub-optimal for this setting. When such a policy encounters an image for which its original prediction is incorrect, the agent will try to open a locked door. Since the agent's policy does not change and the agent remains in the same state, it will simply attempt the same action again, and become stuck perpetually trying to open the wrong door. The only way to avoid getting stuck is to try a different door, whether by policy adaptation or some stochastic posterior sampling scheme. 

In both examples, the offline dataset left uncertainty about how to act optimally, and adaptability allowed the agent to outperform than the fixed strategy learned by pessimism. In the next section, we will study optimality in offline RL to formalize this intuition, towards understanding why adaptability can improve the performance of offline RL agents.

\begin{figure}
    \centering
    \includegraphics[width=0.7\linewidth]{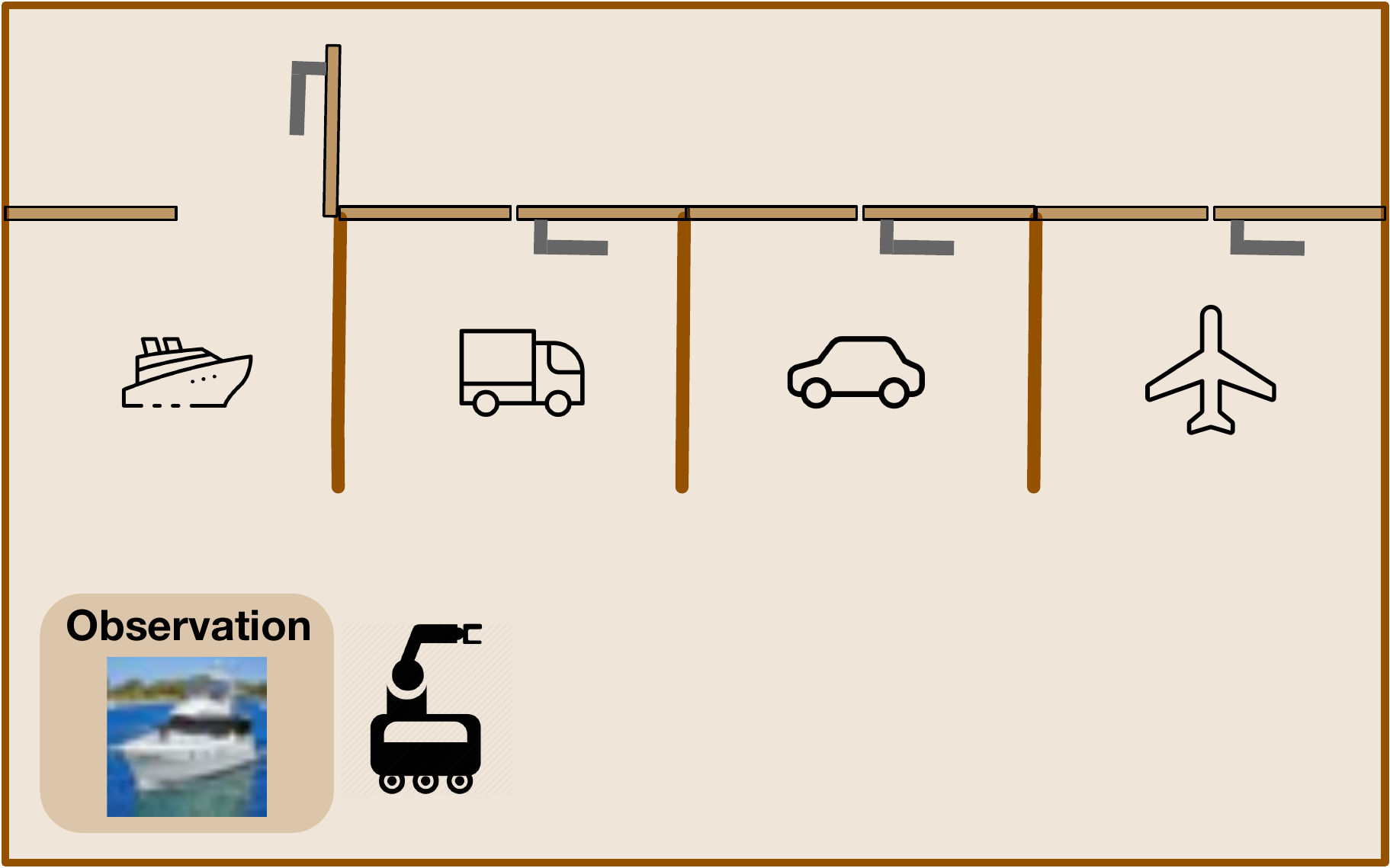}
    \vspace{-0.5em}
    \caption{\footnotesize The \textit{locked doors} task with CIFAR-10 images. Adaptive policies, which can change which door to try based on environment feedback, will outperform Markovian policies, which by design, continue trying the same incorrect door for the entire episode. }
        \vspace{-1em}
    \label{fig:locked_doors_desc}
\end{figure}

\section{Adaptive Policies are Optimal for Offline RL}

We now analyze offline RL from a Bayesian perspective, to examine \textit{why} adaptive behavior might be useful in offline RL, and \textit{how} an offline RL agent should adapt to be maximally performant.

During training, an offline RL agent only receives information about the true environment $\gM^*$ via the dataset $\gD$. In general, this dataset does not uniquely identify $\gM^*$: if the data-collection policy only visits a limited subset of the state space or only takes certain actions, there may be many potential MDPs that behave identically on the states and actions in the dataset, but differ in their transitions and rewards on out-of-sample states and actions. As a result, the dataset induces epistemic uncertainty about the identity of the MDP.
Formally, the dataset $\gD$ and a prior distribution over MDPs $P(\gM)$ define a \emph{posterior distribution} over MDPs, 
$\epomdp \propto P(\gM) P(\gD | \gM)$.

After training, the policy learned via offline RL is deployed into the true MDP, and must act to maximize return in this environment.
The Bayesian objective for policy learning under this model is to maximize return in expectation over MDPs from this posterior distribution,
\begin{equation}
\vspace{-0.5em}
\label{eq:epistemic_pomdp}
    J_{\text{Bayes}}(\pi) \coloneqq \E_{\gM \sim \epomdp}[J_{\gM}(\pi)],
\end{equation}
since this objective defines the agent's expected performance in the true environment under its prior beliefs. As is common in Bayesian treatments of RL \citep{Ghavamzadeh2015BayesianRL}, we call Equation \ref{eq:epistemic_pomdp} the Bayesian offline RL objective, and define its maximizers as \textit{Bayes-optimal policies} for offline RL.

To shed light on what maximizing the Bayesian objective requires, we note that Equation \ref{eq:epistemic_pomdp} corresponds to the expected return of a policy in a partially observable environment defined by the agent's epistemic uncertainty called the \textbf{epistemic POMDP}, following a construction in a Bayesian treatment of generalization in RL \citep{Ghosh2021WhyGI}. 

The epistemic POMDP $\gM_{po}$ may be described as follows: in each episode, the agent is placed in a new MDP $\gM \sim \epomdp$ and asked to maximize reward, but is not told the identity of this MDP. As the MDP $\gM$ dictates the environment dynamics for the agent, but its identity omitted from the observation vector, the agent faces a partially observable problem.
This construction mirrors that of the epistemic POMDP in \citet{Ghosh2021WhyGI} and Bayes-adaptive MDPs \citep{Duff2002OptimalLC}, so we leave the complete technical specification for Appendix~\ref{appendix:pomdp_formulate}. Reflecting on the city navigation example,  we might imagine that the posterior distribution consists of two MDPs, one where the side street has low traffic and one where it has high traffic. Acting optimally without knowing which of these MDPs the agent was placed in requires adaptation, since midway through an episode, the agent will learn whether the side street has high or low traffic and thereby resolve its partial observability.

We can use the epistemic POMDP perspective on Equation \ref{eq:epistemic_pomdp} to show that acting Bayes-optimally for offline RL in an MDP \textit{requires} adaptation, and that in general,  pessimistic Markov policies may do an arbitrarily poor job of optimizing the offline RL objective. 
\begin{theorem}[informal]
The optimal solution to the Bayesian offline RL objective is, in general, adaptive. Further, there are offline RL problem instances $(\gD, p(\gM))$ where the best Markovian policy far underperforms the optimal adaptive solution.
\end{theorem}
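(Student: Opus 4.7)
The plan is to reduce the theorem to two ingredients: (i) a structural reduction of the Bayesian objective to a POMDP return, from which general POMDP theory supplies the qualitative ``adaptation is needed'' statement, and (ii) an explicit parametric family on which the separation can be made arbitrarily large.

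\textbf{Step 1: Reduction to the epistemic POMDP.} First I would make the identification
\[
J_{\text{Bayes}}(\pi) \;=\; \E_{\gM \sim \epomdp}[J_{\gM}(\pi)] \;=\; J_{\gM_{po}}(\pi)
\]
precise, where $\gM_{po}$ is the epistemic POMDP whose unobserved hidden state at episode start is a draw $\gM \sim \epomdp$, and whose emissions are the usual $(s_t, a_t, r_t)$ triples produced by $\gM$. This is essentially a bookkeeping step: both sides marginalize the trajectory law first over $\gM \sim \epomdp$ and then, given $\gM$, over the MDP dynamics; unrolling the expectations line by line gives the equality. I would defer the formal construction of $\gM_{po}$ to Appendix~\ref{appendix:pomdp_formulate} and invoke it here as a black box.

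\textbf{Step 2: Adaptivity of POMDP-optimal policies.} Once the objective is rewritten as a POMDP return, the first clause follows from the classical fact that optimal policies in POMDPs are in general history-dependent (equivalently, belief-state dependent), cf.\ \citet{puterman1994markov}. Concretely, for a non-degenerate posterior the Bayes update after observing $h_t$ gives a belief over $\gM$ that is not a function of $s_t$ alone, and the optimal action depends on this belief. A Markov policy $\pi(a\mid s_t)$ commits to the same action distribution at every history ending at $s_t$, and so cannot track this belief-dependent optimum. This establishes that the Bayes-optimal policy is in general adaptive.

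\textbf{Step 3: A parametric instance with an unbounded gap.} For the quantitative clause I would formalize the \emph{locked doors} example as a family indexed by the number of doors $N$. Fix a single observation $x$ and let $\epomdp$ place uniform mass on $N$ MDPs $\{\gM_i\}_{i=1}^N$ that agree everywhere except which door is unlocked at $x$; opening the correct door gives reward $1$ and terminates, while opening a locked door returns the agent to $x$ (with negligible cost). Any Markov policy conditioned on $x$ picks one door deterministically (or via a fixed distribution), so its Bayesian return is at most $1/N$, independently of any pessimism penalty --- pessimism merely restricts the Markov class, it cannot enlarge it. An adaptive policy that tries the doors in sequence, eliminating those that fail, succeeds within $N$ steps and attains return $\Omega(1)$. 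Sending $N \to \infty$ drives the ratio between the best adaptive and best Markov returns to infinity.

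\textbf{Main obstacle.} The conceptually delicate step is Step 1: checking that $J_{\text{Bayes}} = J_{\gM_{po}}$ really holds at the level of stochastic processes, given that the hidden MDP is resampled per episode but held fixed within an episode. Once this is nailed down (in the appendix), Step 2 is a direct appeal to standard POMDP theory, and Step 3 is a short explicit computation on the constructed family. A minor subtlety in Step 3 is ruling out \emph{all} Markov policies, including pessimistic or stochastic ones; the symmetry of the uniform posterior over $\{\gM_i\}$ makes this immediate, since any Markov action distribution at $x$ has Bayesian success probability exactly its total mass on the (uniformly random) correct door.
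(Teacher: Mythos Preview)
Your Steps 1 and 2 are fine and align with the paper's approach: reduce $J_{\text{Bayes}}$ to the return in the epistemic POMDP, then invoke standard POMDP theory for the qualitative claim.

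The gap is in Step 3. Your claimed bound ``Bayesian return is at most $1/N$'' holds only for \emph{deterministic} Markov policies. In your construction, opening a locked door returns the agent to the same state $x$; a \emph{stochastic} Markov policy that places mass $1/N$ on each door is therefore not a one-shot gambler but a geometric trial process. With negligible per-step cost and no discount it attains return arbitrarily close to $1$, not $1/N$; with discount $\gamma$ its return is $\tfrac{1/N}{1-\gamma(1-1/N)}$, and the ratio to the adaptive policy's return $\tfrac{1}{N}\cdot\tfrac{1-\gamma^N}{1-\gamma}$ stays bounded (of order $2$) uniformly in $N$ and $\gamma$. Your final paragraph conflates the per-step success probability at $x$ (which is indeed $1/N$ by symmetry) with the episodic return, and the ``symmetry makes this immediate'' remark does not close this hole. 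As stated, the locked-doors family does not exhibit an unbounded separation between the best Markov policy and the adaptive optimum.

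The paper avoids this trap by using a different construction: a chain on $\{-n,\dots,n\}$ with uniform posterior over two MDPs that terminate at $+n$ and $-n$ respectively, with reward $-1$ per step. The adaptive policy goes right then doubles back, achieving $J_{\text{Bayes}} = -2n$. Any Markov policy induces a birth--death chain, and the key quantitative fact is that for such chains the sum of the expected hitting times to $+n$ and back to $0$ (and symmetrically for $-n$) is at least $n^2$, uniformly over the birth/death rates; this yields $J_{\text{Bayes}}(\pi_{\text{markov}}^*) \le -\tfrac{1}{2}n^2$. The random-walk $\Theta(n^2)$ hitting time is precisely what forces even the best \emph{stochastic} Markov policy to be quadratically worse --- this is the ingredient your locked-doors argument lacks. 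If you want to repair your route, you must either introduce a non-negligible per-step cost and redo the bounds accordingly (the separation then becomes additive of order $N$, not the multiplicative $N$ you claim), or switch to a construction where stochastic Markov policies are provably slow.
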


These claims are formally stated and proven in Appendix~\ref{appendix:pomdp_formulate}, but we provide intuition here. Since Equation \ref{eq:epistemic_pomdp} can be interpreted as the expected return in a POMDP, Bayes-optimality for offline RL is equivalent to acting optimally under partial observability, which necessitates adaptivity \cite{Monahan2007ASO}. This partial observability also explains why pessimism cannot by itself lead to optimal performance for offline RL. An agent that cannot change its behavior on receiving new information that resolves its partial observability, even if  pessimistic, will be unable to keep up with an adaptive agent that reacts and adapts to this signal.

It is important to note that the epistemic POMDP is not a physical environment that the agent will explicitly ever act in, but rather only an interpretation of the Bayesian offline RL objective. That is, the agent never explicitly encounters partial observability (it always acts in an MDP); rather, we are using the parlance of partial observability to describe how uncertainty induced by the offline dataset affects policy learning and evaluation process under a Bayesian viewpoint.

The POMDP reinterpretation of the Bayesian offline RL objective provides concrete benefits for designing offline RL algorithms. First, it demonstrates the necessity of adaptation in offline RL, since non-Markovianity is well-known to be required for handling partial observability. It also provides post-facto justification for stochastic policies and stochastic regularization, which are common algorithmic design choices in offline RL~\citep{fujimoto2019off, kostrikov2021fisher},
since stochastic policies are better than deterministic counterparts under partial observability \cite{eysenbachMaxEnt, singhPOMDP}. %
We will see in the next section that although it may be infeasible to directly apply POMDP algorithms to learn Bayes-optimal offline RL policies, they can nonetheless serve as inspiration for designing adaptive offline RL methods.

\section{Optimizing for Adaptation in Offline RL}

Our analysis establishes the need for adaptive policies in offline RL, and prescribes performance in the epistemic POMDP (Equation \ref{eq:epistemic_pomdp}) as an objective to learn optimal test-time adaptation mechanisms. Unfortunately, the straightforward approach to learning such policies, explicitly modeling the epistemic POMDP and running a generic POMDP algorithm, is unscalable: it requires modeling the posterior distribution over environment dynamics and reward $\epomdp$, which cannot be obtained with high fidelity in most applications of interest.

We derive a policy update that avoids the burden of learning a posterior over environment dynamics by instead modeling the simpler posterior over value functions $P(Q_\gM^\pi | \gD)$, using the fact that the value function $Q_{\gM}^\pi$ entangles the necessary information about both dynamics and rewards for a given policy. This reduces the burden of uncertainty estimation, as we no longer need to learn an uncertainty-aware dynamics model, and also allows us to leverage recent progress in learning value function posteriors in offline RL~\citep{an2021uncertainty,anonymous2022why}.In this section, we define a suitable class of adaptive policies and derive an exact policy update for the objective, which will serve as the backbone for the practical actor-critic algorithm that we will introduce in Section \ref{sec:algo}.

\subsection{Uncertainty-Adaptive Policies}

In the previous section, we showed the sub-optimality of Markovian policies $\pi(\cdot | s)$ for offline RL since they lack the ability to change behavior on receiving new information that resolves its epistemic uncertainty. The following proposition characterizes precisely what Markovian policies are missing: a measure of how the agent's uncertainty has changed since the beginning of the episode. 

\begin{definition}
The relative MDP belief $\vb(h)$ for a history $h$ is the relative change in the posterior distribution over MDPs after incorporating the history:%
\begin{equation}
\vb(h)(\gM) = \frac{P(\gM | h, \gD)}{P(\gM | \gD)}.
\end{equation} %
\end{definition}

\begin{restatable}{prop}{beliefpolicyoptimal}
\label{corr:beliefpolicyoptimal}
The Bayesian offline RL objective is maximized by a policy depending only on the current state and relative MDP belief: $\pi(\cdot | h_t) = \pi(\cdot | s_t, \vb(h_t))$.
\end{restatable}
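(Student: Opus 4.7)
The plan is to leverage the POMDP reinterpretation of the Bayesian offline RL objective, already established in the text, and apply the classical sufficient-statistic result for POMDPs. Specifically, I will identify the belief state of the epistemic POMDP with the pair $(s_t, \vb(h_t))$ and then invoke the standard fact that any POMDP admits an optimal policy that depends on history only through its belief state together with the current observation.

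First, I would note that in the epistemic POMDP $\gM_{po}$, the hidden variable is the MDP identity $\gM \sim P(\gM \mid \gD)$, and at time $t$ the observation is the underlying state $s_t$ (with the previously emitted rewards and actions also observed). The belief state in this POMDP after witnessing history $h_t$ is $P(\gM \mid h_t, \gD)$, which by Bayes' rule equals
\[
P(\gM \mid h_t, \gD) \;=\; \frac{P(h_t \mid \gM)\,P(\gM \mid \gD)}{P(h_t \mid \gD)} \;=\; \vb(h_t)(\gM)\, P(\gM \mid \gD),
\]
since $\gD$ is fixed throughout evaluation and enters the posterior only through the prior $P(\gM \mid \gD)$. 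Thus, given the fixed training dataset, the map $h_t \mapsto \vb(h_t)$ is in bijection with the map $h_t \mapsto P(\gM \mid h_t, \gD)$, so $\vb(h_t)$ is a sufficient statistic for the posterior over MDPs.

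Next, I would invoke the standard POMDP sufficiency result (e.g.\ \citet{Monahan2007ASO}): for any POMDP, there exists an optimal policy that is a function of the current belief over hidden states together with the current observation. Applying this to $\gM_{po}$, there is an optimal policy of the form $\pi^\star(\cdot \mid s_t, P(\gM \mid h_t, \gD))$. Substituting the identification from the previous step, this policy can be rewritten as $\pi^\star(\cdot \mid s_t, \vb(h_t))$, which yields the claim since maximizing expected return in $\gM_{po}$ is exactly the Bayesian offline RL objective of Equation~\ref{eq:epistemic_pomdp}.

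The main obstacle is simply rigor: making sure the epistemic POMDP is set up so that the classical belief-state sufficiency theorem applies verbatim (in particular, that the transition and reward kernels of $\gM_{po}$ depend on the extended state $(\gM, s_t)$ in a Markovian way, and that $s_t$ is observed while $\gM$ is hidden). Once the POMDP is formally specified as in Appendix~\ref{appendix:pomdp_formulate}, belief-state sufficiency is standard, and the only nontrivial content is the algebraic identification of $\vb(h_t)$ with the belief over $\gM$. Since the dataset $\gD$ is held fixed during evaluation, this identification is essentially immediate from Bayes' rule, so no delicate calculation is required.
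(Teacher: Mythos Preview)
Your proposal is correct and follows essentially the same route as the paper: identify $(s_t,\vb(h_t))$ with the belief state of the epistemic POMDP and invoke the standard belief-sufficiency result for POMDPs \citep{Monahan2007ASO}. The only cosmetic difference is that the paper works with the full hidden state $\bar{s}=(s,\gM)$ and shows $P_{\gM_{po}}(\bar{s}\mid h)=\delta(s=s(h))\,\vb(h)(\gM)\,P(\gM\mid\gD)$ is in bijection with $(s(h),\vb(h))$, whereas you factor out the observed component $s_t$ up front and track only $P(\gM\mid h_t,\gD)$; since $s_t$ is deterministically observed, these two presentations are equivalent.
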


The relative MDP belief $\vb(h)$ should be interpreted as a \textit{weighting} that indicates which of the original hypotheses generated during offline training are consistent with the trajectory seen so far during evaluation. In the epistemic POMDP, it serves as the belief state, serving as a compressed statistic of all the information in the history before the current state that is relevant to maximizing future return. 

This proposition implies that when optimizing the offline RL objective, it is sufficient to consider only \textbf{uncertainty-adaptive} policies $\pi(\cdot | s, \vb)$, those that conditions their behavior not only on the current state (as a Markovian policy does), but also on a sufficient statistic of the relative MDP belief. The simplicity of uncertainty-adaptive policies is particularly appealing: in structure, it is a Markovian policy that takes in an additional input $\vb$, but because this new input $\vb$ changes throughout the episode, the policy is no longer static but rather adaptive with history. Another useful property of uncertainty-adaptive policies is that they are allowed to adapt and change behavior only if the agent's epistemic uncertainty about the environment changes.

\subsection{The Bayesian offline RL policy gradient}

With an appropriate class of adaptive policies, we turn to optimizing the Bayesian objective. We consider a standard first-order optimization approach, whereby a parametric stochastic uncertainty-adaptive policy $\pi_\theta(\cdot | s, \vb)$ updates the parameters $\theta$ according to the gradient of the Bayesian objective (equivalently, the epistemic POMDP objective). The following proposition shows that this policy gradient can be written in terms of the posterior distribution of the current policy's value function.

\begin{restatable}{prop}{bayesgradient}
\label{prop:bayes_gradient}
The gradient of the Bayesian offline RL objective for an uncertainty-adaptive policy $\pi_\theta(\cdot | s, \vb)$ can be written in terms of the MDP value functions $Q_\gM^\pi$ from the posterior distribution:
\begin{align}
\label{eq:bayes_gradient}
 \nabla_\theta J_{\text{Bayes}}(\pi_\theta) &= \E_{h \sim \pi}[ \nabla_\theta \E_{a \sim \pi_\theta(\cdot | s(h), \vb(h))}[ \nonumber\\&{\color{purple} \E_{\gM \sim P(\gM | \gD)}[} {\color{teal}\vb(h)(\gM)}Q_\gM^\pi(h, a){\color{purple}]}].
\end{align}
\end{restatable}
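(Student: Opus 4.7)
The plan is to start from the definition $J_{\text{Bayes}}(\pi_\theta) = \E_{\gM \sim P(\gM\mid \gD)}[J_\gM(\pi_\theta)]$, push the gradient and posterior inside, apply the MDP policy gradient (Equation~\ref{eq:pg_mdp}) per-MDP, and then reweight the joint distribution on $(\gM, h)$ via Bayes' rule so that the marginal over histories in the epistemic POMDP comes out front and the rescaled posterior $P(\gM\mid h, \gD)$ appears inside. The key observation that lets us turn this into a usable estimator is that $P(\gM\mid h, \gD) = \vb(h)(\gM)\, P(\gM\mid \gD)$, which converts the inner conditional posterior into the unconditional posterior multiplied by the relative belief $\vb(h)$.

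First I would linearize: bring $\nabla_\theta$ past the (parameter-free) posterior $P(\gM \mid \gD)$ to get $\nabla_\theta J_{\text{Bayes}}(\pi_\theta) = \E_{\gM \sim P(\gM \mid \gD)}[\nabla_\theta J_\gM(\pi_\theta)]$. Then, for each $\gM$, apply the history-form policy gradient from Equation~\ref{eq:pg_mdp} to write
\[
\nabla_\theta J_\gM(\pi_\theta) = \sum_h P_\gM^{\pi_\theta}(h) \, \nabla_\theta \E_{a \sim \pi_\theta(\cdot \mid s(h), \vb(h))}\!\left[Q_\gM^\pi(h,a)\right],
\]
where $P_\gM^{\pi_\theta}(h)$ denotes the probability of history $h$ under $\pi_\theta$ in $\gM$. (Note: this step uses the standard derivation of the policy gradient for history-dependent policies evaluated in a single MDP, where $Q_\gM^\pi$ absorbs all downstream dynamics so the score-function/deterministic gradient goes through with no new assumption.)

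Second, swap the order of summation over $\gM$ and $h$ and identify the marginal history distribution in the epistemic POMDP,
\[
P^\pi(h) \coloneqq \sum_\gM P(\gM \mid \gD)\, P_\gM^{\pi_\theta}(h).
\]
By Bayes' rule $P(\gM\mid \gD)\, P_\gM^{\pi_\theta}(h) = P^\pi(h) \, P(\gM \mid h, \gD)$, and by the definition of the relative belief $P(\gM\mid h, \gD) = \vb(h)(\gM)\, P(\gM \mid \gD)$. Substituting gives
\[
\sum_h P^\pi(h) \sum_\gM P(\gM \mid \gD) \, \vb(h)(\gM)\, \nabla_\theta \E_{a \sim \pi_\theta(\cdot \mid s(h), \vb(h))}\!\left[Q_\gM^\pi(h,a)\right].
\]
Finally, since $\vb(h)(\gM)$ does not depend on $\theta$ and the inner action expectation does not depend on $\gM$, I can commute $\nabla_\theta$ and $\E_a$ past $\E_{\gM \sim P(\gM \mid \gD)}[\vb(h)(\gM)\, \cdot\,]$, recovering exactly Equation~\ref{eq:bayes_gradient}.

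The main obstacle is the second step, specifically being careful that the history distribution on the left-hand side is $P_\gM^{\pi_\theta}(h)$ (depending on the sampled $\gM$) while on the right-hand side the outer expectation $\E_{h\sim \pi}$ is over the epistemic POMDP marginal $P^\pi(h)$; this requires the Bayes' rule swap to be done on the joint over $(\gM, h)$. A minor subtlety worth flagging is that I am silently assuming $P^\pi(h) > 0$ wherever the integrand is nonzero, which holds automatically since $P^\pi(h) = 0$ forces $P(\gM \mid \gD) P_\gM^{\pi_\theta}(h) = 0$ for all $\gM$ and hence the summand vanishes. With that caveat, the rest is routine algebra.
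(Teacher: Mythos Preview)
Your proof is correct, but it takes a different route from the paper. The paper invokes the epistemic POMDP equivalence $J_{\text{Bayes}}(\pi_\theta)=J_{\gM_{po}}(\pi_\theta)$, applies the POMDP policy gradient once to obtain $\nabla_\theta J_{\text{Bayes}}=\E_{h\sim\pi}[\nabla_\theta \E_{a}[Q_{\gM_{po}}^\pi(h,a)]]$, and then proves the value-function identity $Q_{\gM_{po}}^\pi(h,a)=\E_{\gM\sim P(\gM\mid\gD)}[\vb(h)(\gM)\,Q_\gM^\pi(h,a)]$ by conditioning the POMDP trajectory distribution on $h$. You instead stay entirely on the MDP side: you apply Equation~\ref{eq:pg_mdp} separately in each $\gM$, then perform a Bayes swap on the joint law of $(\gM,h)$ to convert $P(\gM\mid\gD)\,P_\gM^{\pi_\theta}(h)$ into $P^\pi(h)\,\vb(h)(\gM)\,P(\gM\mid\gD)$. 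Your route is slightly more elementary because it never names the POMDP or its value function; the paper's route has the advantage of isolating the identity $Q_{\gM_{po}}^\pi=\E_\gM[\vb\,Q_\gM^\pi]$ as a standalone fact, which is conceptually useful elsewhere in the paper. One small point worth making explicit in your write-up is that $\vb(h)$ does not depend on $\theta$ (the policy factors cancel from numerator and denominator in $P(\gM\mid h,\gD)$), which you implicitly use when pulling $\nabla_\theta$ outside.
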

It is instructive to compare the policy gradient of the Bayesian offline RL objective to a standard MDP policy gradient (Equation \ref{eq:pg_mdp}), accented in color above. First, since the exact MDP is not known from the dataset (only a posterior), we must average the value functions across the induced posterior distribution ${\color{purple}P(\gM | \gD)}$. Second, since the agent's uncertainty changes during an episode, this average is re-weighted by ${\color{teal}\vb(h)(\gM)}$ to account for how the agent's uncertainty has changed from the original dataset posterior after witnessing history $h$. To complete the specification of the policy update, we describe the value function for an uncertainty-adaptive policy $\pi$ in an MDP $\gM$.

\begin{restatable}{prop}{valuefnpomdp}
\label{prop:value_fn_epistemic_pomdp}
The value function for a uncertainty-adaptive policy in an MDP $Q_\gM^\pi(h, a)$ depends on $h_t$ only through $(s_t, \vb(h_t))$ and satisfies the following Bellman recursion:
\begin{equation}
\label{eq:value_consistency}
    \begin{split}
    Q_{\gM}^\pi(s, \vb, a) &= r(s, a) + \gamma \E_{\substack{s' \sim \gM \\ a \sim \pi}} \left[Q_{\gM}^\pi(s', \vb', a)\right]
    \end{split}
\end{equation}
where $\vb' \coloneqq \operatorname{BeliefUpdate}(\vb, (s, a, r, s))$ is the new relative MDP belief after witnessing $(s, a, r, s')$,
\begin{equation}
     \operatorname{BeliefUpdate}(\vb, (s, a, r, s))(\gM) \propto p_\gM(r, s' | s, a) \vb(\gM)
\end{equation}
\end{restatable}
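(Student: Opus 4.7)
The plan is to establish the result in three steps: (i) verify that the relative MDP belief $\vb(h)$ admits the stated Markov update rule, (ii) use this to show that $(s_t, \vb(h_t))$ is a sufficient statistic of $h_t$ for predicting future returns in $\gM$ under an uncertainty-adaptive $\pi$, and (iii) derive the Bellman recursion by a standard one-step decomposition.

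For step (i), I would expand $\vb(h_{t+1})(\gM)$ using the definition $\vb(h)(\gM) = P(\gM \mid h, \gD) / P(\gM \mid \gD)$ together with Bayes' rule, and compare it to $\vb(h_t)(\gM)$. Applying the chain rule to the likelihood gives $P(h_{t+1} \mid \gM) = P(h_t \mid \gM) \cdot \pi(a_t \mid s_t, \vb(h_t)) \cdot p_\gM(s_{t+1}, r_t \mid s_t, a_t)$. The crucial observation is that since $\pi$ is uncertainty-adaptive, the policy factor $\pi(a_t \mid s_t, \vb(h_t))$ is a number depending only on $h_t$ (through $\vb(h_t)$, itself computed from $\gD$ and the prior), and not on $\gM$ itself. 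Hence it cancels between the ratios defining $\vb(h_{t+1})$ and $\vb(h_t)$, and after absorbing the $\gM$-independent normalizer into the proportionality one obtains $\vb(h_{t+1})(\gM) \propto p_\gM(r_t, s_{t+1} \mid s_t, a_t) \, \vb(h_t)(\gM)$, matching the claimed $\operatorname{BeliefUpdate}$.

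Step (ii) then follows by induction on the horizon. Given $(s_t, \vb(h_t))$, the policy distribution $\pi(\cdot \mid s_t, \vb(h_t))$ is determined, the MDP transition $s_{t+1} \sim p_\gM(\cdot \mid s_t, a_t)$ with reward $r_t$ depends only on $(s_t, a_t)$, and by step (i) the successor belief $\vb(h_{t+1})$ is a deterministic function of $(\vb(h_t), s_t, a_t, r_t, s_{t+1})$. Iterating, the joint law of the future trajectory $(s_{t+k}, a_{t+k}, r_{t+k})_{k \geq 0}$ conditioned on $(h_t, a_t)$ in $\gM$ depends on $h_t$ only through $(s_t, \vb(h_t))$. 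Since $Q_\gM^\pi$ is a functional of this law, it too depends on $h_t$ only through $(s_t, \vb(h_t))$, and we may write $Q_\gM^\pi(s, \vb, a)$ unambiguously. For step (iii), I would split the discounted reward sum into the immediate reward $r(s,a)$ and the future tail, condition on $s' \sim p_\gM(\cdot \mid s, a)$ and on $a' \sim \pi(\cdot \mid s', \vb')$ where $\vb' = \operatorname{BeliefUpdate}(\vb, (s,a,r,s'))$, and apply the tower property to obtain the Bellman recursion.

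\textbf{Expected obstacle.} The main subtlety lies in step (i): the cancellation of the policy likelihood depends critically on $\pi$ being uncertainty-adaptive, so that $\pi(a_t \mid \cdot)$ is a fixed numerical function of the history independent of $\gM$. If $\pi$ were allowed to condition on private information about $\gM$, the policy factor would not cancel and the belief update would no longer be Markov in $(\vb, s, a, r, s')$. A secondary technical point is ensuring $P(h_t \mid \gD) > 0$ so that the posterior ratios are well-defined, which holds under standard support assumptions on the prior and evaluation-time rollouts.
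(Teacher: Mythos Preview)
Your proposal is correct and follows essentially the same route as the paper: both argue that $(s_t,\vb(h_t))$ is a sufficient statistic of the history for the future trajectory law under an uncertainty-adaptive policy (the paper phrases this via conditional independences, you via an explicit induction), and then obtain the Bellman recursion by a one-step decomposition. Your step~(i), deriving the $\operatorname{BeliefUpdate}$ formula from Bayes' rule and noting that the policy factor cancels because $\pi$ does not condition on $\gM$, is more explicit than the paper's proof, which simply asserts that $\vb_{T+1}$ depends only on $(s_T,a_T,\vb_T)$ without spelling out the computation.
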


The value function for an uncertainty-adaptive policy is similar to that of a Markovian policy, the main change being the need to take into account how the relative MDP belief changes over time  ($\vb$ on the left, $\vb'$ on the right). Since the value function understands how values change when the agent's uncertainty changes and the policy adapts, optimizing the policy using these value functions provides the learning signal for the policy to adapt in a way most conducive to maximizing test-time return.

Together, the propositions describe policy and value function learning rules needed to learn Bayes-optimal uncertainty-adaptive policies for offline RL. Holistically, three main characteristics distinguishes this Bayesian offline RL policy update from policy optimization in a known MDP: 1) we must learn a posterior distribution over value functions instead of a single value function,  2) the policy conditioned on a belief $\vb$ must maximize the $\vb$-reweighted average across the value function posterior, and 3) the value function posterior must be trained to account for how the agent's uncertainty changes when a new transition is seen.

\begin{algorithm*}
\caption{Adaptive Policies with Ensembles of Value Functions (\algoname)}\label{alg:true_epi}
\begin{algorithmic}
\STATE Receive input: dataset $\gD$, number of ensemble members $n$
\STATE Initialize policy $\pi(\cdot | s, \vb): \gS \times \dist_n \to \dist(\gA)$
\STATE Initialize ensemble of value functions $\{\hQ_1, \dots \hQ_n\}$, where $\hQ_k(s, \vb, a): \gS \times \dist_n \times \gA \to \R$ 
\WHILE{$\pi$ has not converged}
    \STATE Sample transition $(s, a, r, s') \sim \gD$ from dataset and possible belief $\vb \sim p(\vb)$
    \STATE Approximate next-step belief $\vb' = \text{BeliefUpdate}(\vb, (s, a, r, s'))$ using Equation \ref{eq:surrogate_belief_update}

    \STATE Optimize value functions to minimize TD error taking into account the updated belief $\vb \to \vb'$
    \begin{equation}
        \min \gL(\hQ_k) \coloneqq (\hQ_k(s, \vb, a) - (r + \gamma \E_{a' \sim \pi(\cdot | s', \vb')}[\hQ_k(s', \vb', a')]|))^2 ~~~\forall k \in \{1, \dots n\} 
    \vspace{-0.2em}
    \end{equation}

     \STATE Optimize adaptive policy $\pi(\cdot | s, \vb)$ to maximize $\vb$-weighted average of value functions
       \begin{equation}\label{eq:true_h}
        \max_{\pi(\cdot | s, \vb)} \E_{a_\pi \sim \pi}[\sum_{k} \vb_k \hQ_k(s, \vb , a_\pi)]
        \end{equation}
    \vspace{-0.5em}
\ENDWHILE
\end{algorithmic}
\end{algorithm*}

\section{Practical Algorithm}
\label{sec:algo}

In order to practically implement the policy update dictated in the previous section, we must approximate the posterior distribution over value functions for our current policy, and choose a representation for the relative MDP belief to pass into our adaptive policy. This section tackles these issues; the result is an actor-critic algorithm, \algoname, for learning adaptive policies for offline RL in an MDP.

\subsection{A finite approximation to the posterior}

We choose to approximate the posterior distribution over value functions given the offline dataset  $P(Q_\gM^\pi | \gD)$  with a finite ensemble of value functions $\{\hQ_1, \dots \hQ_n\}$, where $\hQ_k$ is the value function of $\pi$ in $\gM_k \sim p(\gM | \gD)$. For this finite posterior, the relative MDP belief $\vb$ is a probability distribution over the $n$ MDPs corresponding to our value function ensemble, starting as a uniform distribution $\vb_0 = [\frac{1}{n} \dots \frac{1}{n}]^\top$ at the beginning of the episode, and skewing towards the MDP most consistent with the witnessed trajectory as the episode continues. In the remainder of this section, we will write $\vb \in \R^n$ on the $n$-dimensional probability simplex, with $\vb_k \coloneqq \vb(\gM_k)$.

\subsection{Training value functions}

We now discuss how each value function $\hQ_k$ in the posterior should be learned. As described by Proposition \ref{prop:value_fn_epistemic_pomdp}, learning the value function for a uncertainty-adaptive policy $\pi(a | s, \vb)$ requires two changes from that for Markov policies: 1) the value function must take in both the state $s$ and relative belief $\vb$, since the policy $\pi$ depends on both; 2) the Bellman consistency equation must incorporate how the relative belief shifts $\vb \to \vb'$ when the new transition $(s, a, r, s')$ is witnessed.

Ensuring that the value function accounts for how new transition $(s, a, r, s')$ changes the relative belief is what provides the policy the signal for adjusting its adaptation mechanism towards higher performance. One problem with implementing this scheme is that $\operatorname{BeliefUpdate}(\vb, (s, a, r, s))$ depends on transition log-likelihoods $\log P_{\gM}(s', r | s, a)$, which we do not model, and so we must replace it with an approximation. Recognizing that $- \log P_{\gM}(s', r | s, a)$ represents the information-theoretic surprise of witnessing the transition $(s, a, r, s')$ in $\gM$, we choose to replace it with a surrogate surprise $\log \hat{P}_{\gM} (\cdot)$ defined by our value model:
\begin{equation}
    \label{eq:surrogate_belief_update}
    \begin{split}
    \log \hat{P}_{\gM_k}&(s', r | s, a) = -\big|\hQ_k(s, \vb, a) \\
     &- (r + \gamma \E_{a' \sim \pi}[\hQ_k(s', \vb, a')])\big|^2 
    \end{split}
\end{equation}
$\log \hat{P}_{\gM_k}$ provides a measure of how likely we are to see the tuple $(V(s'), r)$ in $\gM_k$ (as a proxy for the likelihood of $(s', r)$) and remains aligned with the state dynamics model $P_{\gM_k}$, in that both are high if $(s', r)$ truly comes from $\gM_k$. 

With these pieces in place, to update $\hQ_k(s, \vb, a)$ using the transition $(s, a, r, s')$, we first estimate the new belief vector $\vb' = \operatorname{BeliefUpdate}(\vb, (s, a, r, s))$ using our surrogate model, and then optimize to achieve the consistency in Equation \ref{eq:value_consistency} by minimizing TD error.
This leads to the following objective for a single value function in our ensemble:
\vspace{-0.5em}
\begin{equation}
\begin{split}
    \gL_{\text{critic}}(\hQ_k) &= \E_{\substack{(s, a, r, s') \sim \gD \\ \vb \sim p(\vb)}}\big[\big(\hQ_k(s, \vb, a) \\
    & - (r + \gamma \E_{a' \sim \pi(\cdot | s', \vb')}[\hQ_k(s', \vb', a')]|)\big)^2\big].
\end{split}\nonumber
\end{equation}\par
    \vspace{-1em}
Sampling the relative weighting $\vb \sim p(\vb)$ independently of the transition  allows us to estimate values for all possible belief weightings that we may encounter at any state.

\subsection{Training the policy}
The policy we learn takes in a state $s$ and current belief vector $\vb$ and outputs a distribution over actions. As discussed in Proposition \ref{prop:bayes_gradient}, for any state $s$ and belief $\vb$, the optimal ascent direction maximizes the average over the value functions in our ensemble \textit{weighted by the belief}, leading to the following policy loss for an offline actor-critic method:
\begin{equation}
    \gL_{\text{actor}}(\pi) = -\E_{\substack{s \sim \gD \\ \vb \sim p(\vb)}}\big[\E_{a \sim \pi_\theta(\cdot |s, \vb)}[\sum_k \vb_k \hQ_{k}(s, \vb,a)]\big].\nonumber
\end{equation}

This policy objective has a simple interpretation: since $\vb$ defines our current belief over which MDP we are actually in, when choosing an action to take at a particular state $s$, we should place greater consideration on the value function for the MDPs that we consider to be more likely. As in the value function objective, we train the policy for many different choices of belief so that at test-time, our policy can act appropriately for whatever belief we may have at that particular moment. 
\begin{algorithm}
\caption{\algoname ~Test-Time Adaptation}\label{alg:true_epi_test}
\begin{algorithmic}
\STATE $s_0 = $ \textsc{env.reset()}
\STATE Initialize belief vector to uniform: $\vb_0 = [\frac1n, \dots, \frac1n]^\top$
\FOR{ environment step $t=0,1, \dots$}
    \STATE Sample action: $a_t \sim \pi(\cdot| s_t, \vb_t)$
    \STATE Act in environment: $r_t, s_{t+1} \gets$ \textsc{env.step($a_t$)}
    \STATE Update belief vector using new transition (Eq \ref{eq:surrogate_belief_update})\\
    ~~~~~~~~~~$\vb_{t+1} = \operatorname{BeliefUpdate}(\vb_t, (s_t, a_t, r_t, s_{t+1}))$
\ENDFOR
\end{algorithmic}
\end{algorithm}

\vspace{-1em}
\subsection*{Summary}

Our actor-critic algorithm (outlined in Algorithm \ref{alg:true_epi}) has three core components: an ensemble of state-action value functions $\{\hQ_1, \dots \hQ_n\}$, a weighting $\vb(h)$ over the ensemble that represents how well each value function describes the seen history, and an adaptive policy $\pi(\cdot | s, \vb)$ trained against the ensemble. For any belief weighting $\vb$, the policy is trained to maximize the $\vb$-weighted average of the value function ensemble, which prioritizes the value functions most consistent with the seen history. The ensemble of value functions is trained using standard TD methods, but modified to account for how the belief weighting changes $\vb \to \vb'$ with new transitions. During evaluation, the belief state $\vb$ is updated online using Equation \ref{eq:surrogate_belief_update}, which downweights ensemble members whose value predictions are inconsistent with the new transitions; this, in turn, causes the policy to output actions focused on maximizing only the consistent value functions in the ensemble.

\section{Related Work}

\textbf{Offline RL:}   
Policy learning challenges in offline RL \citep{Lange2012BatchRL, levine2020offline} arise when the environment is underspecified, for instance when the dataset is collected with a narrow behavior policy or collected only in part of the state spaces~\citep{mandlekar2021matters, Fu2020D4RLDF}. A commonly used principle to handle this is pessimism: biasing the policy learning objective towards the data-collection policy, and disincentivizing behaviors that may take the agent away from the distribution of states seen in the dataset. Pessimism can be incorporated in many ways, such as policy regularization~\citep{fujimoto2019off, kumar2019stabilizing, ghasemipour2021emaq, kostrikov2021offline} or forming conservative value estimates~\citep{kumar2020conservative, an2021uncertainty, luo2021mesa, jin2021pessimism}. In practice, these methods can suffer from over-conservatism without careful tuning ~\citep{kumar2021workflow}.

\textbf{Ensembles in RL:} Ensembles of value functions have been used in the online RL setting for a variety of purposes: to avoid over-estimation \cite{Hasselt2016DeepRL, Fujimoto2018AddressingFA}, to enable faster exploration \cite{Osband2013MoreER}, and to stabilize the training objective \cite{Chen2021RandomizedED, Lee2021SUNRISEAS}. In offline RL, value ensembles have been deployed primarily to provide a source of pessimism into the policy optimization objective: \citet{Agarwal2020AnOP} averages an ensemble of value functions to attain more stable target values in TD-learning, \citet{anonymous2022why} form LCB estimates of expected return in the MDP using an ensemble of value functions; \citet{an2021uncertainty} optimizes policies against the most pessimistic value function in the ensemble. In contrast to these static methods that learn a Markovian policy optimizing some ensemble statistic (e.g. min, LCB, median).  our adaptive method learns an weighting-conditioned policy that optimizes all possible re-weightings of the ensemble, and uses Bayes filtering at test-time to recover the best re-weighting.

\textbf{Bayesian RL:} We study offline RL within the Bayesian RL framework (see \citep{Ghavamzadeh2015BayesianRL} for a survey), which has been studied in many other sub-fields of RL \citep{Ramachandran2007BayesianIR, Lazaric2010BayesianMR, Jeon2018ABA, Zintgraf2020VariBADAV}, most prominently for deriving exploration strategies in online RL. The epistemic POMDP interpretation of the Bayesian offline RL objective we introduce is a specific instantiation of the Bayes-adaptive MDP \citep{Duff2002OptimalLC} and related to the construction of \citet{Ghosh2021WhyGI}. Being exactly Bayes-optimal is known to be intractable, so algorithms have been developed to learn approximations of Bayes-optimality that leverage either explicit access to the posterior over MDPs or trajectories collected in the posterior \cite{Zintgraf2020VariBADAV, Dorfman2020OfflineML}. This prevents their immediate application in offline RL, since this posterior distribution cannot be constructed with high fidelity, motivating our approach of avoiding explicitly modelling this posterior over dynamics.

\section{Experiments}

The primary aim of our experiments is to ascertain whether adaptability leads to improved performance in offline RL. Thus, we provide an evaluation on standard D4RL benchmark tasks \cite{Fu2020D4RLDF} and two offline RL tasks that require handling ambiguity and generalization, \textit{Locked Doors} and \textit{Procgen Mazes}. We do not expect adaptability to uniformly improve across these domains, since some datasets and environments do not lead to multiple hypotheses that an agent may adapt between (e.g., in the city navigation example, if the dataset contained \textit{no} trajectories from small roads, then no improvement can be expected over the conservative strategy using big roads). Implementation details and hyperparameters in Appendix~\ref{appendix:locked_doors}.

\begin{figure}
    \centering
    \includegraphics[width=0.9\linewidth]{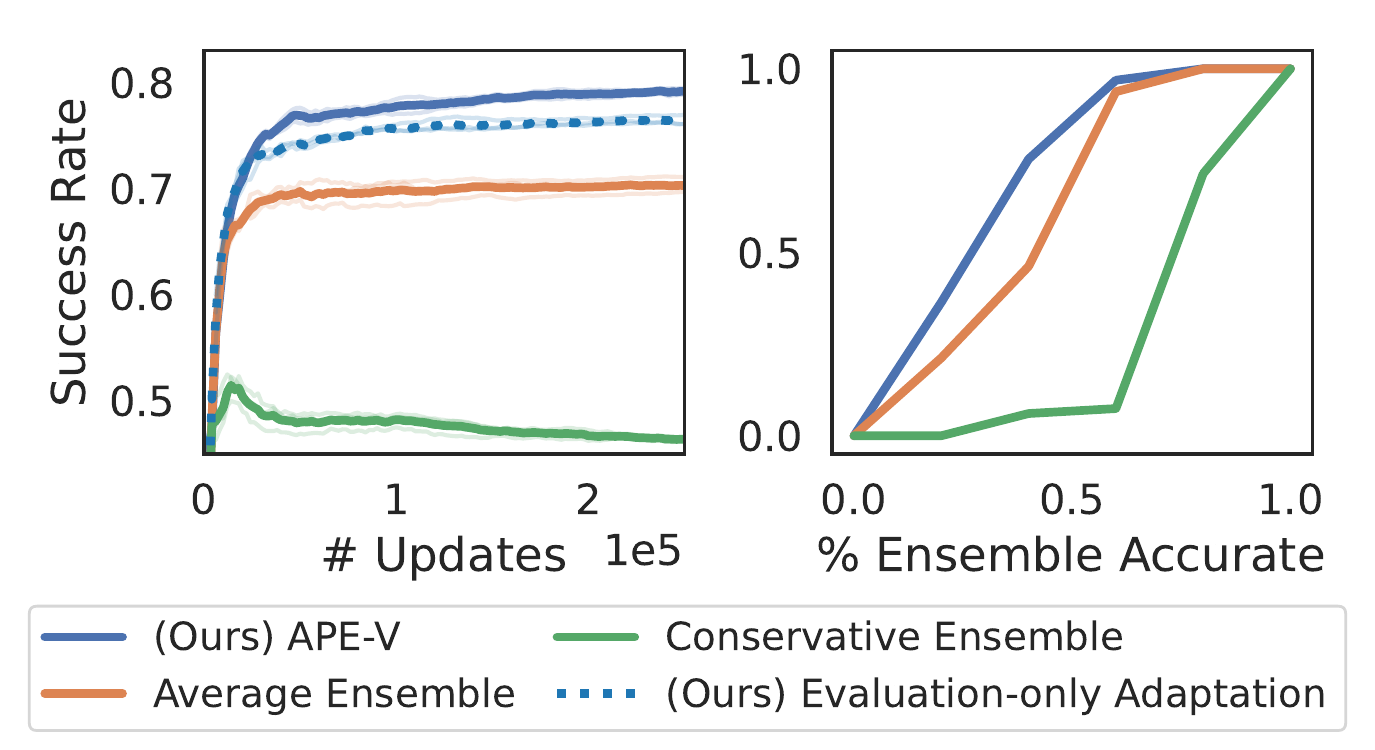}
    \vspace{-1.5em}
    \caption{\textbf{Locked Doors with CIFAR-10:} (left) Success rate over training for various ensemble-based offline RL schemes (right) Success rate conditioned on the number of ensemble members with successful classification. Adaptive policies outperform stochastic and pessimistic alternatives because they succeed with higher probability if at least one ensemble member outputs accurate values.}
    \vspace{-1em}
    \label{fig:locked_doors_success}
\end{figure}
\vspace{-0.5em}
\begin{table*}
	\centering
	\normalsize
	\caption{Normalized average returns on D4RL suite, averaged over 4 seeds. Full results in Appendix \ref{appendix:d4rl}.}
	\vspace{0.2em}
	\label{tab:gym}
	\begin{adjustbox}{max width=\linewidth}
		\begin{tabular}{l|cccccc|r}
			\toprule
			\multirow{2}{*}{\textbf{Task Name}} & \multirow{2}{*}{\textbf{BC}} & 
			\textbf{SAC} & 
			\textbf{REM} & 
			\textbf{CQL} & 
			\textbf{IQL} & 
			\textbf{SAC-$N$} & 
			\multirow{2}{*}{\textbf{\algoname}} \\
			& & \citep{haarnoja2018soft} & \citep{Agarwal2020AnOP} & \citep{kumar2020conservative} & \citep{kostrikov2021offline} & \citep{an2021uncertainty} & \\
			\midrule
			halfcheetah-medium & 43.2$\pm$0.6 & 55.2$\pm$27.8 & -0.8$\pm$1.3 & 44.4 & 47.4$\pm$0.2 & 67.5$\pm$1.2 & \textbf{69.1 $\pm$ 0.4} \\
			halfcheetah-medium-replay & 37.6$\pm$2.1 & 0.8$\pm$1.0 & 6.6$\pm$11.0 & 46.2 & 44.2$\pm$1.2 & \textbf{63.9$\pm$0.8} & \textbf{64.6 $\pm$ 0.9}  \\
			hopper-medium-replay & 16.6$\pm$4.8 & 7.4$\pm$0.5 & 27.5$\pm$15.2 & 48.6 & 94.7$\pm$8.6 & \textbf{101.8$\pm$0.5} & {98.5 $\pm$ 0.5} \\
			walker2d-medium & 70.9$\pm$11.0 & -0.3$\pm$0.2 & 0.2$\pm$0.7 & 74.5 & 78.3$\pm$8.7 & 87.9$\pm$0.2 & \textbf{90.3 $\pm$ 1.6} \\
			walker2d-medium-replay & 20.3$\pm$9.8 & -0.4$\pm$0.3 & 12.5$\pm$6.2 & 32.6 & 73.8$\pm$7.1 & 78.7$\pm$0.7 & \textbf{82.9 $\pm$ 0.4}\\
\bottomrule
		\end{tabular}
	\end{adjustbox}
	\vspace{-1em}
\end{table*}

\subsection{Locked Doors with CIFAR10}

We first study the \textit{Locked Doors} domain from Section \ref{sec:examples}, where an agent seeks to exit a room, but must infer which door it can leave out of by parsing a CIFAR-10 image  (visualized in Figure \ref{fig:locked_doors_example_trajectory}, details in Appendix~\ref{appendix:locked_doors}). Since CIFAR-10 is a well-studied problem in supervised classification where the training set does not fully eliminate uncertainty about test image labels, embedding CIFAR-10 into an offline RL navigation problem provides us a controlled way of studying the effect of a limited offline dataset within an RL domain with a challenging perception component.  %

Figure \ref{fig:locked_doors_success} displays the success rate of policies learned by various offline RL procedures, where we see that policies that are adapted at test-time  outperform non-adaptive baselines. Using \algoname, which trains the value ensemble and policy to approximate Bayes-optimal behavior, leads to the highest performance and robust adaptation. Using Equation \ref{eq:surrogate_belief_update} to adapt a policy from an ensemble trained agnostic of adaptation also leads to large improvements, although less than when trained explicitly for adaptation with \algoname. We observe qualitatively that the learned \algoname~policy better takes advantage of the spatial layout of the room than an adaptive agent not trained for adaptation; the agent tries and eliminates doors physically closer, rather than in order of most-likely to least-likely, leading to more efficient pathing.

To test the resiliency of these ensemble-based agents, we plot their success rate conditioned on how many value functions provide correct predictions for the given image (Figure \ref{fig:locked_doors_success} (right)).
This scales poorly for a conservative policy trained with LCB values, since an incorrect value function causes actions towards the correct door to be excessively penalized. Averaging models performs better, but degrades once the majority of the ensemble members are incorrect. In comparison, the adaptive policy can often succeed in an environment even when only one ensemble member offers correct predictions for the current image. We provide implementation details for all comparisons in Appendix~\ref{appendix:locked_doors}.

\begin{figure}
    \centering
    \includegraphics[width=0.8\linewidth]{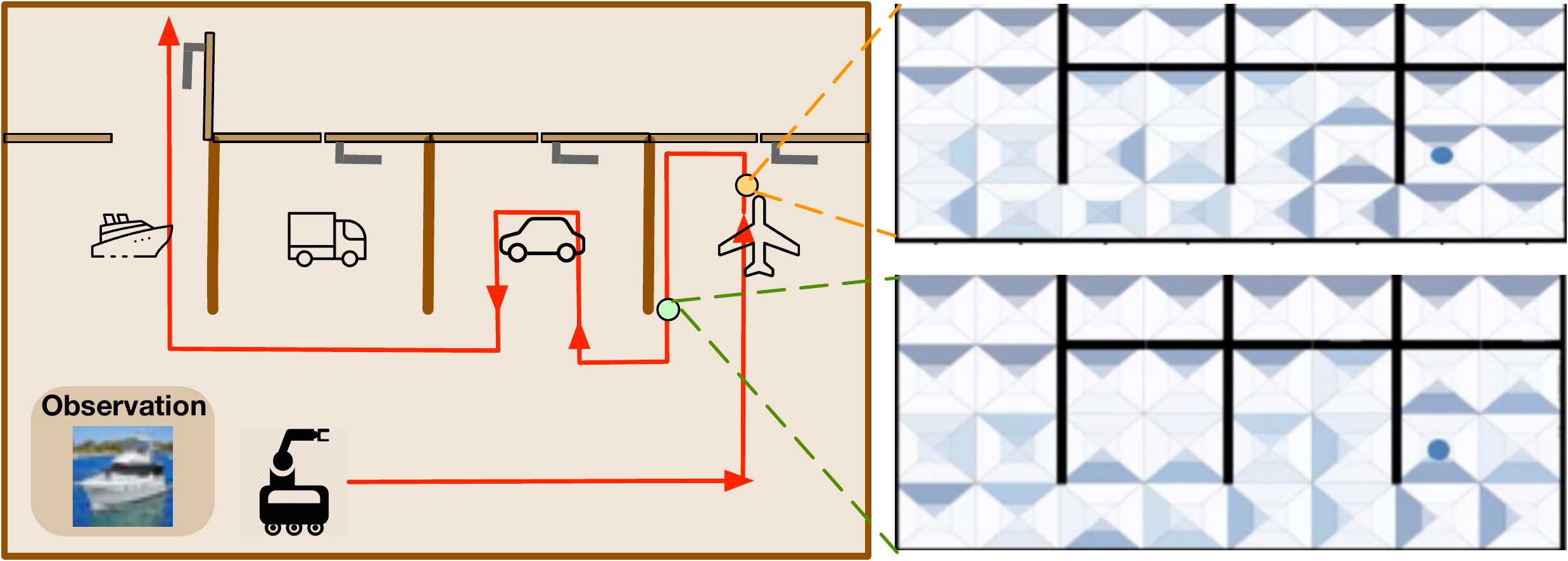}
    \vspace{-1em}
    \caption{\textbf{Adaptation in Locked Doors:} (left) A trajectory within the \textit{Locked Doors} domain; (right): visualization of adaptive policy before and after trying the locked door on the far right.  }
    \vspace{-1em}
    \label{fig:locked_doors_example_trajectory}
\end{figure}

\subsection{Procgen Mazes}
We next investigate the performance of APE-V on an offline variant of the Maze task from the Procgen benchmark \citep{Cobbe2020LeveragingPG}, a challenging image-based benchmark. An agent, which receives a top-down $64 \times 64$ image of the maze (visualized in Figure \ref{fig:procgen_performance}), must navigate to the specified exit before the episode ends. During training, the agent receives an offline dataset of transitions from $N_{\text{train}}$ training mazes, each with differing layout and visual textures, and is evaluated on new unseen mazes during deployment. We construct the offline dataset to contain transitions of the agent at all locations within the training mazes -- note that despite this uniform coverage, the agent faces epistemic uncertainty at test-time since it is evaluated on new levels never seen in the offline dataset. The full experimental setup is described in Appendix \ref{appendix:procgen}.

We compare the performance of APE-V to an ablation that doesn't account for epistemic uncertainty and one that learns conservative value functions (using LCB) rather than being adaptive.When provided a training dataset of 1000 training mazes ($\approx~5 \times 10^5$ transitions), APE-V is able to reliably solve $79\%$ of new mazes at test-time, higher than that achieved by conservatism or ignoring uncertainty (Figure \ref{fig:procgen_performance}). In a more data-limited setting with only 200 training mazes, APE-V is also able to outperform the conservative approach, although all methods succeed far less frequently on held-out mazes than the previous setting (Table \ref{tab:procgen}).    
\begin{figure}
    \centering
    \includegraphics[width=0.38\linewidth]{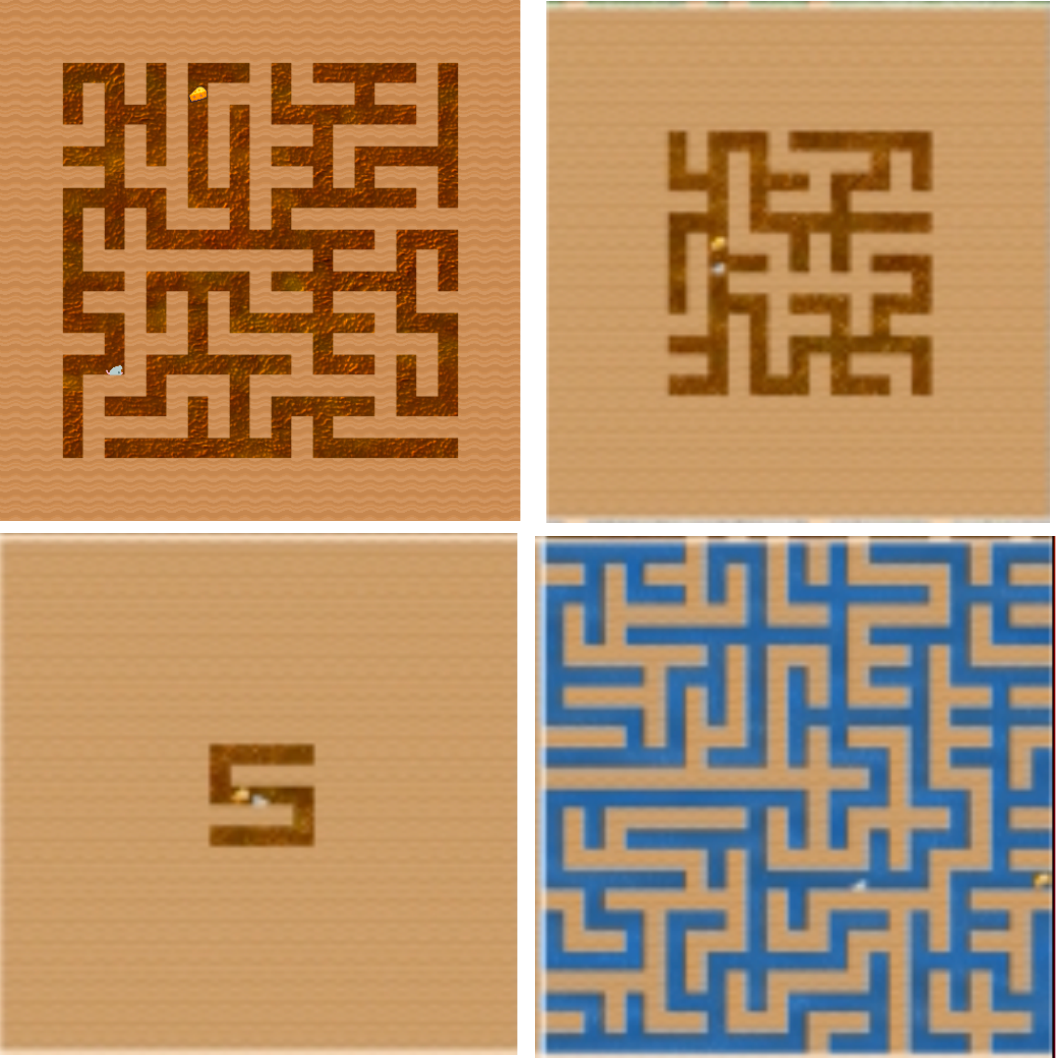}
    \includegraphics[width=0.6\linewidth]{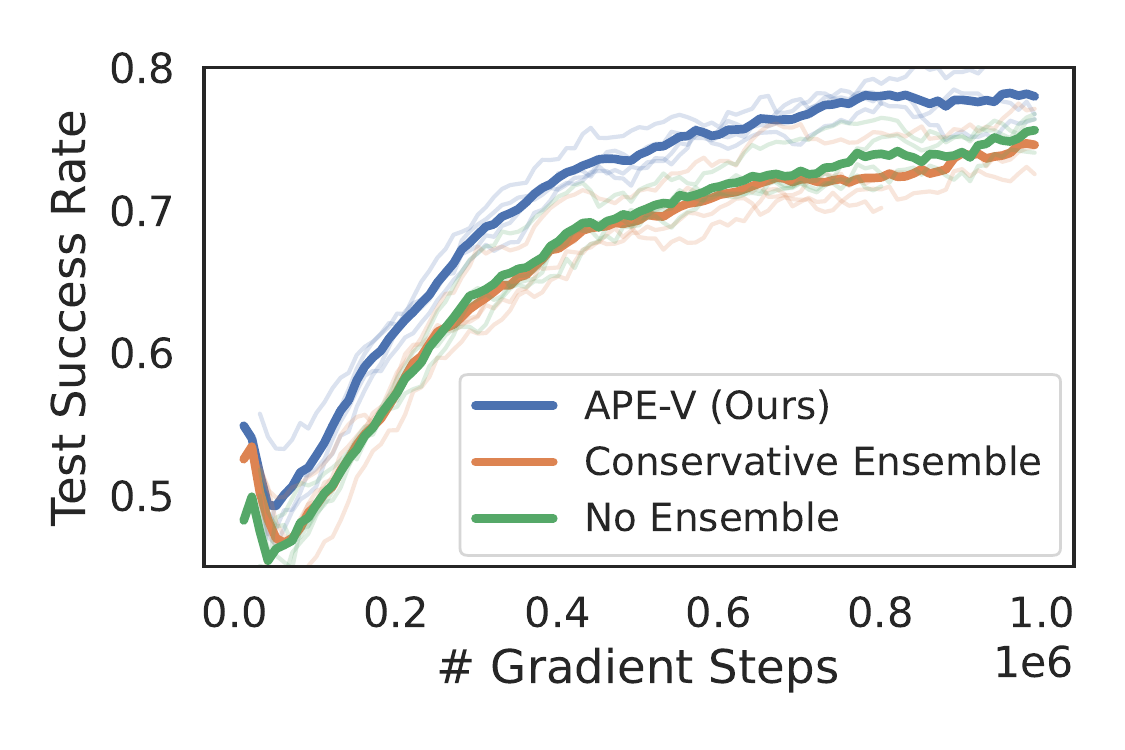}
    \vspace{-1em}
    \caption{\textbf{Procgen Mazes:} (left) 4 sample mazes visualized in Procgen. (right) Evaluation success rates on held-out mazes when trained on offline dataset of 1000 mazes. APE-V, which adaptively chooses between candidate value functions, performs better than acting conservatively with respect to the value ensemble. }
    \vspace{-1em}
    \label{fig:procgen_performance}
\end{figure}

\begin{table}
	\centering
	\caption{Maze-solving success rates, averaged over 4 seeds.}
	\small
	\vspace{0.2em}
	\label{tab:procgen}
	\begin{adjustbox}{max width=\linewidth}
	\begin{tabular}{l|cc|cc}
			\toprule
			 &
			 \multicolumn{2}{|c|}{\textbf{200 Train Levels}}
             & 
                \multicolumn{2}{|c}{\textbf{1000 Train Levels}}\\
                & Train & Test & Train & Test \\ 
			\midrule
			No Ensemble & 0.96 {\tiny $\pm 0.02$}& 0.24 {\tiny $\pm .02$}& 0.93 {\tiny $\pm 0.01$} & 0.75 {\tiny $\pm 0.03$}\\
			Conservative & 0.96 {\tiny $\pm 0.02$} & 0.23 {\tiny $\pm 0.02$}& 0.93 {\tiny $\pm 0.01$} & 0.75 {\tiny $\pm 0.04$} \\
			APE-V &  0.97 {\tiny $\pm 0.00$} & \textbf{0.31} {\tiny $\pm 0.04$}& 0.92 {\tiny $\pm 0.01$}& \textbf{0.79} {\tiny $\pm 0.04$}\\
\bottomrule
		\end{tabular}
	\end{adjustbox}
\end{table}

\subsection{D4RL}
To complement our analysis in the CIFAR-10 Locked Rooms domain and Procgen Mazes, we also investigate the performance of \algoname~on the D4RL benchmark \citep{Fu2020D4RLDF}. We instantiate our method using SAC-$n$ \citep{an2021uncertainty} as the base procedure to learn each value function in our ensemble (details in Appendix~\ref{appendix:d4rl}).

Within-episode adaptation with \algoname~leads to higher evaluation performance than SAC-$n$, which learns a pessimistic Markov policy, and other pessimism-focused methods like CQL \citep{kumar2020conservative} when trained on data from a medium-performance agent (\texttt{xxxx-medium}), or data from the buffer of an RL agent (\texttt{xxxx-medium-replay}). When the data is collected from a random policy (\texttt{xxxx-random}) or contains expert demonstrations (\texttt{xxxx-medium-expert}), \algoname~performs equivalently to the conservative baseline We do note that the improvement from adaptivity on D4RL is relatively minor; we believe this happens because these tasks generally do not have data distributions that lead to multiple salient hypotheses, so adapting amongst these strategies leads to marginal gain. 

To better understand adaptation in this domain, we compare the performance of the adaptive policy to non-adaptive policies that holds the belief state static to $\vb = \ve_k$ (i.e., following only the $k$-th ensemble member $\hQ_k$), visualized for \texttt{walker2d-medium} in Figure \ref{fig:adaptation_d4rl}. For this task, we see that the different members of the ensemble lead to slightly differing performances, and that \algoname~ in fact receives average return above all these individual strategies, indicating that adaptation within the episode indeed allows the policy to adapt to a better strategy than it may have started with.

\begin{figure}
    \centering
    \includegraphics[width=0.9\linewidth]{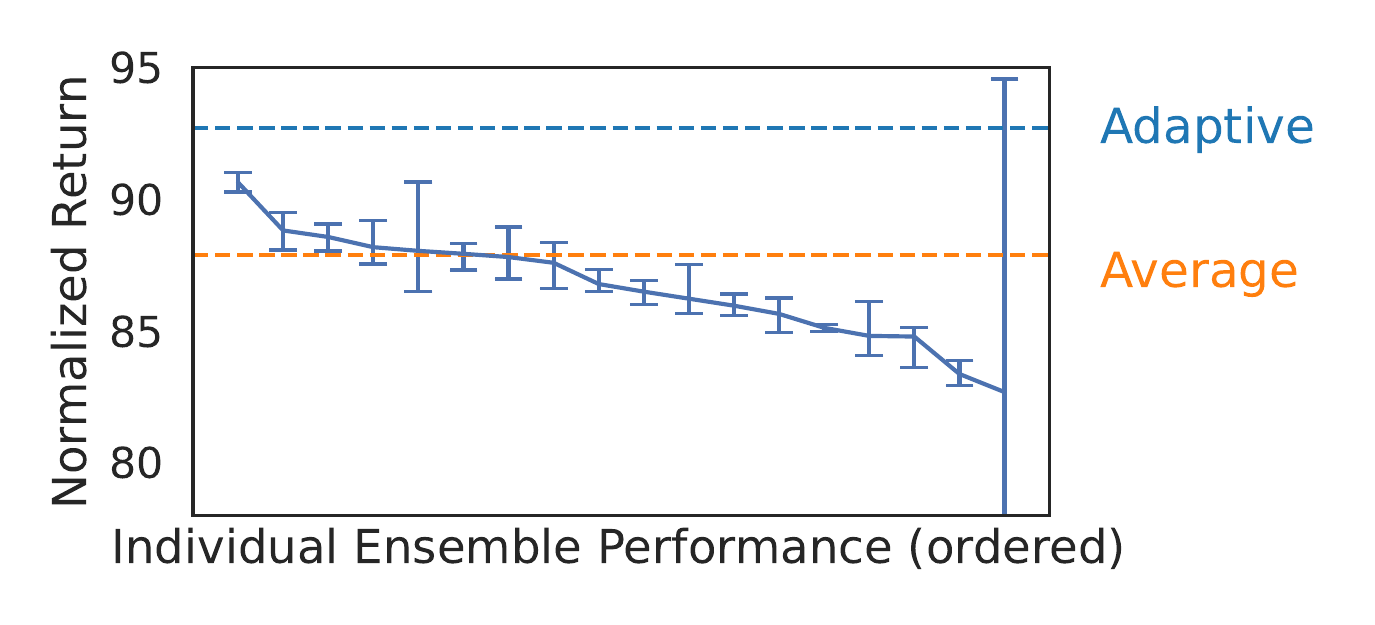}
    \label{fig:adaptation_d4rl}
    \vspace{-1em}
    \caption{\textbf{Adaptation in D4RL:} Performance of ensemble members vs adaptive policy on \texttt{walker2d-medium}. Adapting the belief $\vb$ within an episode outperforms any static choice of $\vb$.}
    \vspace{-1em}
\end{figure}

\section{Discussion}

In this paper, we discussed how the Bayesian perspective on offline RL naturally leads to adaptive policies. We showed that the epistemic uncertainty an offline RL agent faces due to the limited dataset manifests as an implicit partial observability, and therefore necessitates adaptive behaviors to act optimally. We then derived a policy gradient formulation in this setting that allows us to express the gradient of an adaptive policy in terms of a distribution over Q-functions and a posterior over MDPs that is induced by the uncertainty inherent in any finite-data offline RL problem. We instantiated an offline RL algorithm based on this principle called \algoname, and showed how an ensemble of value functions can be used to approximate the theoretically motivated adaptive policy update. Our algorithm is only a first foray in optimizing for adaptation in offline RL. Methods with more expressive posterior distributions over value functions, or those with more scalable adaptation mechanisms have the potential for greater benefits in adaptation. Moving forward, understanding how we may infuse existing pessimistic perspectives in offline RL with the benefits conferred by adaptivity is likely to be an exciting direction, one which we hope will lead to more powerful algorithms for learning behaviors from offline sources. 

\subsection*{Acknowledgements}
The authors thank Abhishek Gupta, Aviral Kumar, Colin Li, Katie Kang, Laura Smith, Young Geng, the members of RAIL \& Improbable AI Lab, and the anonymous reviewers for discussions and helpful feedback. We thank MIT Supercloud and the Lincoln Laboratory Supercomputing Center for providing compute resources. This research was supported by an NSF graduate fellowship, a DARPA Machine Common Sense grant, an MIT-IBM grant, the Office of Naval Research, ARL W911NF-21-1-0097, and Intel.

\bibliography{example_paper}
\bibliographystyle{icml2022}

\newpage
\appendix
\onecolumn

\section{Supplementary for Section 4}
\label{appendix:pomdp_formulate}

\subsection{The Epistemic POMDP Formalism}

\textbf{POMDPs:} Before we outline the epistemic POMDP, we first provide a quick introduction to partially observable MDPs (POMDPs). A POMDP is defined by the tuple $(\bar{\gS}, \gA, \gO, \bar{P}, O, r, \rho, \gamma)$, where $\bar{\gS}$ is the (hidden) state space of the POMDP, $\gA$ the action space, $\gO$ is the observation space for the agent, $\bar{P}(\bar{s}' | \bar{s},a )$ is a Markovian transition function between hidden states, $O$ is the emission function that maps a hidden state to the agent's observation $o_t = O(\bar{s}_t)$, $r(\bar{s},a)$ , $\rho(\bar{s})$ the initial hidden state distribution, and $\gamma$ discount factor. The belief state for a POMDP is given by $p(\bar{s} | h)$, the mapping from a history to the conditional distribution over hidden states having seen this history. It is well-known that the optimal policy in a POMDP is in general history-dependent, and that there is always an optimal policy that depends on history only through the belief state. 

\textbf{The Epistemic POMDP: }The epistemic POMDP for offline RL $\gM_{po}$ is a POMDP that satisfies the following property: for any policy $\pi$, $J_{\gM_{po}}(\pi) = J_{\text{Bayes}}(\pi)$. This property means that optimizing the Bayesian offline RL objective is equivalent to optimization in the epistemic POMDP, a useful equivalence for deriving properties of optimality in offline RL and designing new offline RL algorithms.

Formally, given a posterior distribution $\epomdp$ over MDPs with shared state space $\gS$ and action space $\gA$, $\gM_{po}$ is defined as following. The hidden state space is $\bar{\gS} = \gS \times \mathbf{M}$ and a state represented as $\bar{s} \coloneqq (s, \gM)$ and the action space is still $\gA$, the transition function extended as $\bar{P}((s', \gM') | (s, \gM), a) = \delta(\gM = \gM')P_{\gM}(s' | s, a)$. The observation function omits the identity of $\gM$ from the agent observation: $O((s, \gM)) = s$. The reward function is $r((s, \gM), a) = r_\gM(s, a)$ and initial state distribution $\rho((s, \gM)) = \epomdp \rho_{\gM}(s)$.

\subsection{Proof of Theorem 4.1}
\label{app:thm41}
We note that many prior works in POMDPs \citep{singhPOMDP, Duff2002OptimalLC} that show that the optimal solution for POMDPs in general, and for Bayesian objectives is adaptive. In the following theorem, we construct an instance of the epistemic POMDP in which the adaptive policy significantly outperforms all Markovian policies, addressing both desired components.
\begin{proposition}[Sub-optimality of Markovian policies and optimality of adaptiveness]
Let $n \in \mathbb{N}$. There are offline RL problem instances $(\gD, p(\gM))$ with $n$-state MDPs where the adaptive Bayes-optimal policy achieves $J_{Bayes}(\pi_{\text{adaptive}}^*) = -2n$ but the highest performing Markovian policy achieves return of a magnitude worse: $J_{Bayes}(\pi_{\text{markov}}^*) \leq -\frac{1}{2}n^2$.
\end{proposition}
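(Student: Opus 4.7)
The plan is to construct a family of $n$-state MDPs on which a simple adaptive strategy achieves Bayesian return $\Theta(n)$ while every Markovian policy is forced to incur Bayesian cost $\Omega(n^2)$. The construction chains together $n$ copies of the \textit{locked doors} example of Section~\ref{sec:examples}, turning the cost of repeated uninformed attempts at each ``door'' into a quadratic penalty over $n$ stages.

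\textbf{The instance.} Let $\gS = \{s_1, \dots, s_n\}$ with $s_n$ absorbing, let $\gA = \{a_1, \dots, a_n\}$, and let the per-step reward be $-1$ on every non-terminal transition. Define $n$ candidate MDPs $\gM_1, \dots, \gM_n$ that share state, action, and reward structure and differ only in their transitions: in $\gM_k$, action $a_k$ at any non-terminal $s_i$ advances to $s_{i+1}$, while every other action leaves the agent at $s_i$. Take $\gD = \emptyset$ together with a prior $P(\gM)$ uniform on $\{\gM_1, \dots, \gM_n\}$, so that the posterior $\epomdp$ is uniform. This is the offline RL instance $(\gD, p(\gM))$.

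\textbf{Upper bound for adaptive.} I would exhibit an explicit adaptive strategy: at $s_1$, try $a_1, a_2, \dots$ in order until an advance to $s_2$ is witnessed; the successful index $k^*$ uniquely identifies the true MDP, after which the agent plays $a_{k^*}$ forever. Under $\gM_{k^*}$ this takes exactly $k^* + (n-2)$ steps to reach $s_n$, so averaging uniformly over $k^*$ gives expected episode length $\tfrac{n+1}{2} + n - 2 \leq 2n$ and hence $J_{\text{Bayes}}(\pi_{\text{adaptive}}^*) \geq -2n$.

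\textbf{Lower bound for Markov.} The main work is to show that \emph{every} Markov policy $\pi$ pays cost at least $n(n-1)$. Write $p_{k,i} \coloneqq \pi(a_k \mid s_i)$. Under $\gM_{k^*}$, the number of steps $T_i$ spent at $s_i$ before advancing is geometric with success probability $p_{k^*,i}$, so the expected episode length in $\gM_{k^*}$ equals $\sum_{i=1}^{n-1} 1/p_{k^*,i}$. Averaging over the uniform posterior and applying Cauchy--Schwarz (equivalently AM--HM) at each state,
\begin{equation*}
  \sum_{k=1}^{n} \frac{1}{p_{k,i}} \;\geq\; \frac{n^2}{\sum_k p_{k,i}} \;=\; n^2,
\end{equation*}
so the expected number of steps is at least $\tfrac{1}{n}\sum_{i=1}^{n-1} n^2 = n(n-1)$, yielding $J_{\text{Bayes}}(\pi) \leq -n(n-1) \leq -\tfrac{1}{2}n^2$ for $n \geq 2$. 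Degenerate policies with $p_{k,i} = 0$ for some $k$ become stuck in $\gM_k$ and satisfy the bound trivially.

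The delicate step I anticipate is reconciling the expected hitting time analysis with the $\gamma$-discounted infinite-horizon convention used in the paper's MDP formalism: with discounting, the total return is a nonlinear function of the hitting time $T$. I would handle this either by choosing $\gamma$ sufficiently close to $1$ (so that $-(1-\gamma^T)/(1-\gamma)$ tracks $-\E[T]$ up to a constant on the relevant scale $T \lesssim n^2$) or by formulating the instance with an undiscounted episodic horizon $H > n^2$. In either case, both bounds retain their order in $n$, which is all that is needed to establish the quadratic gap.
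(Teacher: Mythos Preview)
Your proposal is correct but follows a genuinely different construction from the paper's. The paper uses only \emph{two} candidate MDPs: a chain on states $\{-n,\dots,0,\dots,n\}$ with left/right actions, where $\gM_1$ terminates at $n$ and $\gM_2$ terminates at $-n$, and the posterior is uniform over these two. The adaptive policy goes right $n$ steps, then left $2n$ steps, giving exactly $-2n$. For the Markov lower bound, the paper invokes classical hitting-time formulas for birth--death chains to show that $T_{0\to n}+T_{n\to 0}\ge n^2$ (minimized by the unbiased random walk), which is heavier machinery than what you use.

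Your $n$-MDP / $n$-action construction is cleaner in two respects: it actually has $n$ states (the paper's chain has $2n{+}1$), and your Markov lower bound is a two-line AM--HM argument rather than an appeal to birth--death absorption times. The trade-off is that your instance does not hit the equality $J_{\text{Bayes}}(\pi^*_{\text{adaptive}})=-2n$: your exhibited strategy gets $-\tfrac{3(n-1)}{2}$, so you only establish $\geq -2n$. If you want the exact constant in the statement, you would need to tune the construction (or simply note, as the paper effectively does, that the quadratic-versus-linear gap is the content). On discounting, the paper resolves your ``delicate step'' by bluntly setting $\gamma=1$ in the construction; you may as well do the same rather than carry the $\gamma\to 1$ argument.
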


\textbf{Summary and Intuition:} Adaptivity is particularly useful when an agent needs to try a sequence of actions, and if it fails, then to backtrack on its steps and return to try something else -- behavior that a Markovian policy cannot well imitate. We capture this intuition in our construction, a chain environment where the agent is uncertain if the goal is on the left or the right, so performing well under the Bayesian objective requires being able to reach both the left-hand side of the environment and the right-hand side of the environment. We will show that an adaptive agent can do this in $O(n)$ timesteps but an Markovian policy will need at least $O(n^2)$ timesteps.

\begin{figure}[b]
    \centering
    \includegraphics[width=0.6\linewidth]{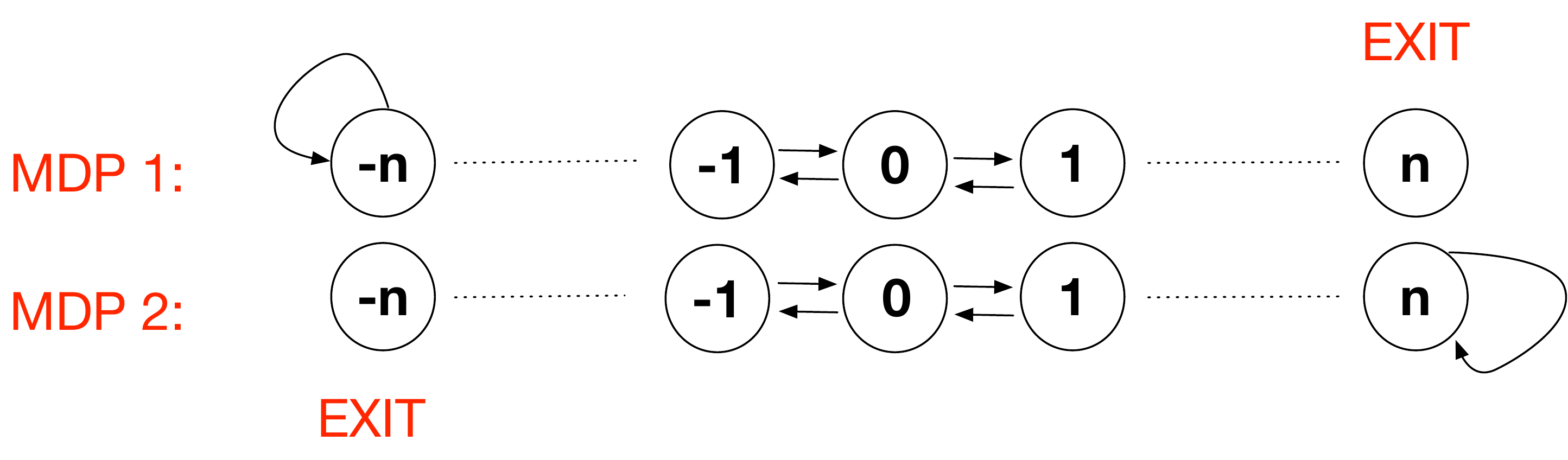}
    \vspace{-0.5em}
    \caption{\footnotesize Visualization of MDP construction described in Appendix~\ref{app:thm41}}
    \label{fig:mdp_construction_proof}
\end{figure}

\begin{proof}~\\
\textbf{MDP construction:} Our construction uses two standard chain MDPs $\gM_1, \gM_2$ supported on the states $\{-n, \dots, 0, \dots, n\}$ where the agent starts at state $0$; at all states $k$, the agent can go left (to $k-1$) or right (to $k+1$), as long as these states exist and receives $-1$ reward at every timestep. We modify these MDPs as follows: in the first MDP $\gM_1$, entering state $n$ leads to immediate termination; in the second MDP $\gM_2$, entering state $-n$ leads to immediate termination. Let $\gamma = 1$.

\textbf{Construction of Posterior Distribution: }Suppose that after having received an offline dataset that the posterior distribution was uniform on these MDPs: $p(\gM_1 | \gD) = p(\gM_2 | \gD) = \frac{1}{2}$. One way that this could happen in practice is if the prior $p(\gM)$ had $p(\gM_1) = p(\gM_2) = \frac{1}{2}$, and no trajectories in the dataset actually reached the ends of the chain ($n$ or $-n$) so it received no signal about this ambiguity. The value of the Bayesian offline RL objective for this problem for a policy is given by $J_{Bayes}(\pi) = -\frac{1}{2}(T_\pi(n) + T_\pi(-n)$, where $T_\pi(n)$ is the expected amount of time it takes $\pi$ to first reach state $n$ and $T_\pi(-n)$ defined similarly.

\textbf{Performance of Bayes-optimal adaptive policy: }The optimal adaptive policy for the Bayesian offline RL objective defined by $p(\gM | \gD)$ is simple: it first goes right for $n$ timesteps, and then left for $2n$ timesteps (or vice-versa). If it is in $\gM_1$, then it exits in $n$ timesteps, otherwise in $3n$ timesteps. This leads to a return of 
\[J_{Bayes}(\pi_{\text{adaptive}}^*) = \frac{1}{2}(-n) + \frac{1}{2}(-3n) = -2n\]

\textbf{Upper bound on performance of Markovian policy: } Consider a Markovian policy $\pi(a|n)$ acting in this environment and define $p_n = \pi(\text{right} | n)$, $q_n = 1- p_n$, and $\ell_n = \frac{p_n}{q_n}$. Notice that $T_\pi(n) + T_\pi(-n)$ can be lower-bounded by $T_{n, 0}$ and $T_{-n, 0}$, where $T_{n, 0}$ is the expected time of $\pi$ to first return to $0$ after visiting $n$, and $T_{-n, 0}$ equivalently. We suppose without loss of generality that $T_{n, 0} < T_{-n, 0}$. We can write \[T_{n, 0} = \underbrace{\E[\text{Time to reach n from 0}]}_{T_{0 \to n}} + \underbrace{\E[\text{Time to reach 0 from n}]}_{T_{n \to 0}}.\]

Noticing that the agent's movement is described by a birth-death chain with birth rate $p_n$ and death rate $q_n$, we can use results about mean absorbtion time in birth-death chains from the stochastic processes literature \citep{2020Discrete} to receive lower-bounds on $T_{0 \to n}$ and $T_{n \to 0}$:

\begin{align}
    T_{0 \to n} &\geq \sum_{j=0}^{n-1} \sum_{k=j+1}^{n-1} \prod_{i=j}^k \ell_i\\
    T_{n \to 0} &= \sum_{j=0}^{n-1} \sum_{k=j+1}^{n-1} \prod_{i=j}^k \ell_i^{-1}\\
    \intertext{We can combine the two to get a lower bound on $T_{n, 0} = T_{n \to 0} + T_{0 \to n}$}
    T_{n, 0} = T_{n \to 0} + T_{0 \to n} &\geq  \sum_{j=0}^{n-1} \sum_{k=j+1}^{n-1} \left(\prod_{i=j}^k \ell_i\right) + \left(\prod_{i=j}^k \ell_i\right)^{-1}\\
\end{align}
We see that this lower bound on $ T_{0 \to n} +  T_{n \to 0}$ is minimized when $\ell_i = 1$ (that $p_i = q_i = \frac{1}{2}$) for all states $i$, and takes value $n^2$. This provides a lower bound on $T_\pi(n) + T_\pi(-n)$ and therefore, the following upper bound on the performance of the Markovian policy:
\[J_{Bayes}(\pi_{\text{markov}}^*) \leq -\frac{1}{2}n^2\]
\end{proof}
\clearpage
\section{Derivations for Section 5}
\label{appendix:adapt_formulate}

\beliefpolicyoptimal*

\begin{proof}
We will show that there is an optimal policy that takes form $\pi(\cdot | s, \vb)$ by showing that $(s, \vb)$ forms a belief state for the epistemic POMDP, and use the well-known fact that there always exists an optimal policy depending only on the belief state of the POMDP \citep{Monahan2007ASO}.

Recall that the true state in the epistemic POMDP $\bar{s}$ is given by the tuple $\bar{s} \coloneqq (s, \gM)$, where $s$ is the current MDP state and $\gM$ the MDP currently being acted in. By definition, a belief state for the POMDP is one that is isomorphic to the distribution $P(\bar{s} | h)$.

We first show that $P(\bar{s} | h)$ can be recovered from $(s(h), \vb(h))$:
\begin{align*}
P_{\gM_{po}}(\bar{s}|h) = P_{\gM_{po}}(s, \gM|h) &=  P_{\gM_{po}}(s|h) P_{\gM_{po}}(\gM|h, s)\\
&= \delta(s = s(h)) P(\gM | h, \gD) \\
&= \delta(s = s(h))\vb(h)(\gM) P(\gM | \gD).
\end{align*}
That $(s(h), \vb(h))$ can be recovered from $P(\bar{s} | h)$ follows immediately from the definition of $\vb(h)$ (since it is defined in terms of $P(\gM | h, \gD)$). Therefore, $(s(h), \vb(h))$ is a belief state for the epistemic POMDP, and as immediate corollary, there exists an optimal policy for the epistemic POMDP that depends only on this belief state.
\end{proof}

\bayesgradient*
\begin{proof}
From the equivalence $J_{\text{Bayes}}(\pi_\theta) = J_{\gM_{po}}(\pi_\theta)$ and the policy gradient of a history-based policy in a POMDP \cite{Monahan2007ASO}, we have that
\begin{align}
    \nabla_\theta J_{\text{Bayes}}(\pi_\theta) = \E_{h \sim \pi}[ \nabla_\theta \E_{a \sim \pi_\theta(\cdot | h)}[Q_{\gM_{po}}^\pi(h, a)]],
\end{align}
where $Q_{\gM_{po}}^\pi(h, a)$ is the value function of $\pi$ in the epistemic POMDP. To prove the desired statement, we show that $Q_{\gM_{po}}^\pi(h, a) = \E_{\gM \sim \epomdp}[\vb(h)(\gM)Q_\gM^\pi(h, a)]$.

Let $p_{\gM}^\pi(\tau)$ be the distribution over trajectories $(s_0, a_0, r_0, s_1, a_1, \dots)$ for the belief-based policy $\pi$ in MDP $\gM$. From construction of the epistemic POMDP, we can see that $p_{\gM_{po}}^\pi(\tau) = \E_{\gM \sim \epomdp}[p_\gM(\tau)]$. For a $T$-step history $h \coloneqq (s_0, a_0, r_0, s_1, \dots s_T)$, the value function of $\pi$, $Q_{\gM_{po}}^\pi(h, a)$ is given by 
\begin{align}
    Q_{\gM_{po}}^\pi(h, a) &= \E_{\tau \sim p_{\gM_{po}}^\pi}\left[\sum_{t=T}^{\infty} \gamma^{t - T} r_{t} | h=h, a=a\right]\\
    &= \int \left(\sum_{t=T}^{\infty} \gamma^{t - T} r_{t}\right) p_{\gM_{po}}^\pi(\tau | h) d\tau\\
    &= \int\int \left(\sum_{t=T}^{\infty} \gamma^{t - T} r_{t}\right) p_{\gM}^\pi(\tau | h) P(\gM | \gD, h) d\gM~d\tau\\
    &= \int\int \left(\sum_{t=T}^{\infty} \gamma^{t - T} r_{t}\right) p_{\gM}^\pi(\tau | h) \vb(h)(\gM)P(\gM | \gD) d\gM~d\tau\\
    &= \int\int \left(\sum_{t=T}^{\infty} \gamma^{t - T} r_{t}\right) p_{\gM}^\pi(\tau | h) \vb(h)(\gM)P(\gM | \gD) d\gM~d\tau\\
    &= \int \underbrace{\E_{\tau \sim p_\gM^\pi}\left[ \left(\sum_{t=T}^{\infty} \gamma^{t - T} r_{t}\right) | h=h, a=a \right]}_{Q_\gM^\pi(h, a)} \vb(h)(\gM)P(\gM | \gD) d\gM~d\tau\\
    &= \E_{\gM \sim P(\gM | \gD)}[\vb(h) Q_\gM^\pi(h, a)]
\end{align}
\end{proof}

\valuefnpomdp*
\begin{proof}
Consider an arbitrary $T$-step history $h \coloneqq (s_0, a_0, r_0, s_1, \dots, s_T)$; we write $\vb_t$ to be the relative MDP belief after $t$ steps. The value of any policy $\pi$ in $\gM$ after taking action $a$ is defined as 
\[Q_\gM^\pi(h, a) = \E_{\pi, \gM}[\sum_{t=T}^\infty \gamma^{t-T} r_t | h=h, a_T=a].\]

We note the following conditional independences: $s_{T+1} \perp h ~|~ \left(s_T, a_T\right)$ because $\gM$ has Markovian transition dynamics and that $a_{T+1} \perp h~|~\left(s_T, \vb_T\right)$ as $a_{T+1}$ depends only on $s_{T+1}$ and $\vb_{T+1}$ (since $\pi$ is belief-based) and $\vb_{T+1}$ depends only on $s_T$, $a_T$, and $\vb_T$. Iterating the argument over, we see that the future trajectory depends on $h$ only through $s_T$ and $\vb_T$, and therefore the expected discounted return $Q_\gM^\pi(h, a)$ also only depends on $h$ through $s_T$ and $\vb_T$. 

 \[Q_\gM^\pi(h, a) = Q_\gM^\pi(s(h), \vb(h), a).\]
 
 To derive the recursion, we simply notice that by definition, $Q_\gM^\pi(h, a)$ must satisfy 
 \begin{align}
     Q_\gM^\pi(h, a) = r(s, a) + \gamma \E_{s' \sim P_\gM(\cdot | s, a)}[\E_{a' \sim \pi(\cdot | h \cup (s, a, r, s')}[Q_\gM^\pi(h \cup (s, a, r, s'), a')]]\\
    \intertext{Injecting the fact that $Q_\gM^\pi$ and $\pi$ depend only on $s$ and $\vb$, we get the desired consistency equation}
     Q_\gM^\pi(s, \vb, a) = r(s, a) + \gamma \E_{s' \sim P_\gM(\cdot | s, a)}[\E_{a' \sim \pi(\cdot | s', \vb')}[Q_\gM^\pi(s', \vb', a')]]
\end{align}

\end{proof}

\clearpage
\section{Details for \textit{Locked Doors} Domain}
\label{appendix:locked_doors}
\subsection{Task Description}
At beginning of every episode, an image is uniformly sampled from a reduced CIFAR10 dataset consisting only of classes airplane, automobile, ship and truck. The agent receives this ${32 \times 32 \times 3}$ image in its observation, alongside its current $(x, y)$ position in the room (see figure \ref{fig:locked_doors_desc}); success requires the agent to go through the door corresponding to the right image class within $T=50$ steps; it physically is unable to go through any of the other doors during the episode. The action space is discrete corresponding to the four cardinal directions.

\subsection{Implementation details}
For the locked doors domain, we instantiate \algoname~using deep Q learning~\citep{mnih2013playing} as the base learning algorithm to train the ensemble of Q networks. Since the action-space is discrete, we do not maintain a separate actor network, instead directly computing the optimal policy given the value functions. For \algoname, this corresponds to using $\pi(a|s, \vb) = \argmax_{a} \sum_k \vb_k \hQ_k(s, \vb, a)$ throughout training and test-time. For the average ensemble baseline, we train the $Q$ networks separately using deep Q learning, and only combine the value functions together at test-time. For \algoname~(evaluation only) baselines, we follow the same training procedure as for the average ensemble baseline, but choose actions according to the adaptive policy $\argmax_{a} \vb_k \hQ_k(s, \vb, a)$ at test-time. Our conservative ensemble baseline uses an LCB estimate of Q-values to define the policy ($\pi(a|s) = \argmax \text{mean}(\{\hQ_k(s,a)\}_k) -  \beta \text{std}(\{\hQ_k(s,a)\}_k)$) during training and evaluation.

\begin{table}[t!]
    \centering
    \small
    \caption{Hyperparameters used for training Q learning based agents in Locked Doors domain}
    \begin{tabular}{cc}
    \toprule
    Hyperparameter & Value \\ [0.5ex]
    \midrule
     $\gamma$ & 0.98 \\ [0.5ex]
     batch size & 256 \\ [0.5ex]
     learning rate & 1e-3 \\ [0.5ex]
     Optimizer & Adam~\citep{kingma2014adam} \\ [0.5ex]
     Training steps & 250k \\ [0.5ex]
     Number of ensembles & 5 \\ [0.5ex]
     $p(\vb)$ & SymmetricDirichlet(0.1) \\ [0.5ex]
    \bottomrule
    \end{tabular}
    \label{tbl:hyperparams_locked}
\end{table}
Hyperparameters for training \algoname~and the accompanying baselines are presented in table~\ref{tbl:hyperparams_locked}. The neural network architecture we use for our value functions has a CNN head used for processing the CIFAR image, with 3 convolution layers with output channel of 32 and kernel size of 3 and a dense layer with output dimension of 10. Each of the convolution layer is followed by ReLU activation and Avg pooling with a stride of 2. The output of the CNN head, the current belief vector $\vb$ and the (x,y) position of the agent are concatenated and passed to a fully connected network, with 2 hidden layers of size 256 and ReLU activation, to get Q values for all the 4 actions.   

\subsection{Additional Analysis} 
\begin{wrapfigure}{r}{0.3\linewidth}
\includegraphics[width=\linewidth]{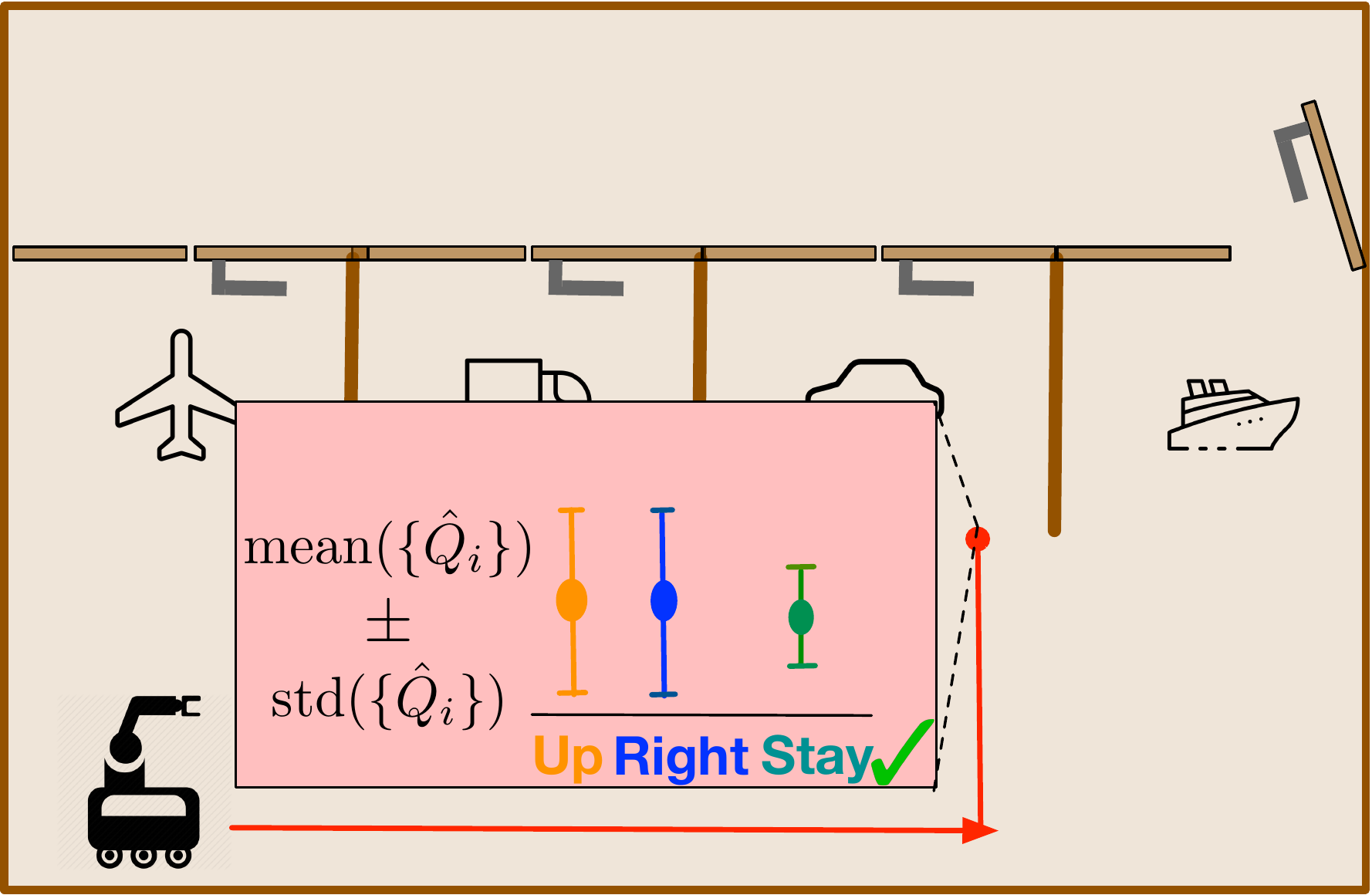}
\end{wrapfigure}
In our experiments, we found that the conservative ensemble significantly underperforms, receiving lower return than even a single ensemble member. When visualizing the behavior of the conservative baseline, we found that it has a tendency to get stuck in place even before trying any of the doors. As an example of this failure mode, visualized in the figure, on a new test image, the \textit{right} action has a high Q-value under the first ensemble member but low under the second, and the \textit{up} action has a low Q-value under the first ensemble member but high in the second. The stay action is neutral (neither helpful nor harmful) under both value functions in the ensemble, and is chosen by the LCB statistic since it appears to have the same net benefit (although this is not true in practice).

\section{Details for \textit{Procgen Mazes} Domain}
\label{appendix:procgen}

\subsection{Environment and Dataset Details}
The Procgen Maze task \citep{Cobbe2020LeveragingPG} requires an agent to control a mouse to reach a block of cheese (the goal) in some maze layout within 500 environment steps. Each time-step, the agent receives a render of the full environment as a $64 \times 64 \times 3$ image (which contains the mouse, the full maze layout, and the goal), and must take one of $15$ actions (the standardized Procgen interface). Since the whole maze is visible to the agent at each time-step, there is no partial observability and the environment is an MDP. We procedurally generate a dataset for the Procgen task from a set of training levels in the following way. For each maze, we enumerate the list of valid positions in the maze, then manually reset the agent to each position and take the \{Up, Down, Left, Right\} actions, logging the ensuing transitions. These per-maze datasets are concatenated together to form the full offline dataset (we create two offline datasets, one with $200$ training mazes, and another with $1000$ mazes).

\subsection{Implementation details}
\begin{table}[t!]
    \centering
    \small
    \caption{Hyperparameters used for training Q learning based agents in Procgen Mazes domain}
    \begin{tabular}{cc}
    \toprule
    Hyperparameter & Value \\ [0.5ex]
    \midrule
     $\gamma$ & 0.99 \\ [0.5ex]
     Reward shift & -1.0\\
     Distributional support & \textsc{linspace(-31, 9, 81)} \\
     Batch size & 256 \\ [0.5ex]
     Learning rate & 6.25e-5 \\ [0.5ex]
     Optimizer & Adam~\citep{kingma2014adam} \\ [0.5ex]
     Training steps & $10^6$ \\ [0.5ex]
     Number of ensembles & 2 \\ [0.5ex]
     $p(\vb)$ & SymmetricDirichlet(1.0) \\ [0.5ex]
    \bottomrule
    \end{tabular}
    \label{tbl:hyperparams_procgen}
\end{table}
For all of our comparisons, we train value functions using the C51 \citep{Bellemare2017ADP} algorithm for discrete max-Q learning, using prioritized experience replay \citep{Schaul2016PrioritizedER} to sample from the offline dataset. We parameterize the Q-function using the same Impala encoder as \citet{Cobbe2020LeveragingPG}, with a linear readout for the logits of the distributional value function. As in \textit{Locked Doors}, since APE-V additionally takes in a belief vector, we concatenate the belief vector to the output of the visual Impala encoder, and pass this on to the rest of the network. Aside from these environment-specific details, training is the same as in the Locked Doors domain -- exact hyperparameter details are provided in Table \ref{tbl:hyperparams_procgen}. 

\clearpage
\section{Details for \textit{D4RL} Benchmark}
\label{appendix:d4rl}
\subsection{Implementation Details}
For the D4RL benchmark, we instantiate \algoname~using SAC-n~\citep{an2021uncertainty} as our base value learning method since it has fewer issues of optimization than standard SAC on the D4RL tasks. SAC-n parameterizes a Q-function as the minimum of $n$ independent value functions (standard SAC is SAC-n with $n=2$)  \algoname~learns an ensemble of $K$ SAC-n agents, each with different values of $n$ and maintains a belief over them for test time adaptation. To better capture the uncertainty in the environment, we promote diversity amongst the SAC-n agents in the ensemble by choosing a different value of $n$ for each of them. We parameterize the actor using a set of $K$ independent networks as well to avoid potential challenges in optimizing different combinations of our $Q$-functions simultaneously: $\pi(\cdot | s, \vb) = \sum_{i=1}^K \vb_i \pi_i(\cdot | s)$ and use $p(\vb) = \text{SymmetricDirichlet}(.01)$. Since \algoname~contains an ensemble of $K$ SAC-n agents, each of which contains $n$ Q functions, learning \algoname~can become computationally intensive for large values of $n$. To remain computationally tractable, we implement weight sharing between Q functions of different SAC-n agents. Specifically, let $\{n_1, \dots n_K\}$ be the values of $n$ we wish to use for each of our ensemble members; we train $\max_i n_i$ ensembles $\{Q_1, \dots Q_{\max_i n_i}\}$, each with $K$ heads. Using this, we define the $i$-th SAC-n agent $Q_i^{\text{SAC-n}}$ as $Q_i^{\text{SAC-n}} = \min \{Q_1^i, \ldots, Q_{n_i}^i\}$. This construction ensures that all the networks within each SAC-n agent be independent, while avoiding creating an excessive number of ensembles.

We use hyperparameters from \citet{an2021uncertainty} (\href{https://github.com/snu-mllab/EDAC.git}{https://github.com/snu-mllab/EDAC.git}). The original implementation of SAC-$n$ uses separate values of $n$ for each domain: $10$ for half-cheetah, $20$ for walker, and $500$ for hopper. We train $N-2$ SAC-$n$ agents, with values of $n$ in $\{2, \dots, N\}$, where $N$ is the number of ensembles in the original implementation of SAC-$n$. With the exception of Half-Cheetah (which has the lowest number of ensembles), we implement weight sharing among SAC-n agents contained in \algoname~ to reduce ensemble training costs.

\begin{table*}
	\centering
	\normalsize
	\caption{Normalized average returns on D4RL suite, averaged over 4 random seeds.}
	\vspace{0.2em}
	\label{tab:gym_full}
	\begin{adjustbox}{max width=\linewidth}
		\begin{tabular}{l|cccccc|r}
			\toprule
			\multirow{2}{*}{\textbf{Task Name}} & \multirow{2}{*}{\textbf{BC}} & 
			\textbf{SAC} & 
			\textbf{REM} & 
			\textbf{CQL} & 
			\textbf{IQL} & 
			\textbf{SAC-$N$} & 
			\multirow{2}{*}{\textbf{\algoname}} \\
			& & \citep{haarnoja2018soft} & \citep{Agarwal2020AnOP} & \citep{kumar2020conservative} & \citep{kostrikov2021offline} & \citep{an2021uncertainty} & \\
			\midrule
			halfcheetah-random & 2.2$\pm$0.0 & 29.7$\pm$1.4 & -0.8$\pm$1.1 & \textbf{35.4} & 31.3$\pm$3.5 & 29.8$\pm$1.6 & 29.9$\pm$1.1 \\
			halfcheetah-medium & 43.2$\pm$0.6 & 55.2$\pm$27.8 & -0.8$\pm$1.3 & 44.4 & 47.4$\pm$0.2 & 67.5$\pm$1.2 & \textbf{69.1 $\pm$ 0.4} \\
			halfcheetah-medium-expert & 44.0$\pm$1.6 & 28.4$\pm$19.4 & 0.7$\pm$3.7 & 62.4 & 95.0$\pm$1.4 & \textbf{102.7$\pm$1.5} &\textbf{ 101.4 $\pm$ 1.4 }\\
			halfcheetah-medium-replay & 37.6$\pm$2.1 & 0.8$\pm$1.0 & 6.6$\pm$11.0 & 46.2 & 44.2$\pm$1.2 & \textbf{63.9$\pm$0.8} & \textbf{64.6 $\pm$ 0.9}  \\
			hopper-random & 3.7$\pm$0.6 & 9.9$\pm$1.5 & 3.4$\pm$2.2 & 10.8 & 5.3$\pm$0.6 & \textbf{31.3$\pm$0.0} & \textbf{31.3$\pm$0.2x} \\
			hopper-medium-expert & 53.9$\pm$4.7 & 0.7$\pm$0.0 & 0.8$\pm$0.0 & \textbf{111.0} & 96.9$\pm$15.1 & 110.1$\pm$0.3 & 105.72 $\pm$ 3.7 \\
			hopper-medium-replay & 16.6$\pm$4.8 & 7.4$\pm$0.5 & 27.5$\pm$15.2 & 48.6 & 94.7$\pm$8.6 & \textbf{101.8$\pm$0.5} & {98.5 $\pm$ 0.5} \\
			walker2d-random & 1.3$\pm$0.1 & 0.9$\pm$0.8 & 6.9$\pm$8.3 & 7.0 & 5.4$\pm$1.7 & \textbf{16.3$\pm$9.4} & \textbf{15.5$\pm$8.5} \\
			walker2d-medium & 70.9$\pm$11.0 & -0.3$\pm$0.2 & 0.2$\pm$0.7 & 74.5 & 78.3$\pm$8.7 & 87.9$\pm$0.2 & \textbf{90.3 $\pm$ 1.6} \\
			walker2d-medium-expert & 90.1$\pm$13.2 & 1.9$\pm$3.9 & -0.1$\pm$0.0 & 98.7 & 109.1$\pm$0.2 & \textbf{116.0$\pm$6.3} & 110.0 $\pm$ 1.5\\
			walker2d-medium-replay & 20.3$\pm$9.8 & -0.4$\pm$0.3 & 12.5$\pm$6.2 & 32.6 & 73.8$\pm$7.1 & 78.7$\pm$0.7 & \textbf{82.9 $\pm$ 0.4}\\
\bottomrule
		\end{tabular}
	\end{adjustbox}
\end{table*}
\vspace{-1em}

\end{document}